\newtheorem{theorem}{Theorem}
\newtheorem{lemma}{Lemma}
\def\network-oblivious{individually consistent}
\title{Collaborative Learning of Stochastic Bandits over a Social Network}
\author{Ravi Kumar Kolla, Krishna Jagannathan and Aditya Gopalan
\thanks{R.~Kolla, K.~Jagannathan are with the Department of Electrical Engineering, IIT
Madras, Chennai, India 600036. Email: {\tt \{ee12d024,   krishnaj\}@ee.iitm.ac.in}. A.~Gopalan is with the Department of Electrical Communication Engineering, IISc, Bangalore, India 560012. Email: {\tt aditya@ece.iisc.ernet.in}
        }}
\begin{document}

\maketitle
\thispagestyle{empty}
\pagestyle{empty}

\begin{abstract}
  We consider a collaborative online learning paradigm, wherein a
  group of agents connected through a social network are engaged in
  playing a stochastic multi-armed bandit game. Each time an agent
  takes an action, the corresponding reward is instantaneously
  observed by the agent, as well as its neighbours in the social
  network. We perform a regret analysis of various policies in this
  collaborative learning setting. A key finding of this paper is that
  natural extensions of widely-studied single agent learning policies
  to the network setting need not perform well in terms of regret. In
  particular, we identify a class of non-altruistic and
  individually consistent policies, and argue by deriving regret lower
  bounds that they are liable to suffer a large regret in the
  networked setting. We also show that the learning performance can be
  substantially improved if the agents exploit the structure of the
  network, and develop a simple learning algorithm based on
  dominating sets of the network. Specifically, we first consider a
  star network, which is a common motif in hierarchical social
  networks, and show analytically that the hub agent can be used as an
  information sink to expedite learning and improve the overall
  regret. We also derive network-wide regret bounds for the
  algorithm applied to general networks. We conduct numerical
  experiments on a variety of networks to corroborate our analytical
  results.
\end{abstract}
\IEEEpeerreviewmaketitle
\section{Introduction}
\label{Introduction}
We introduce and study a collaborative online learning paradigm,
wherein a group of agents connected through a social network are
engaged in learning a stochastic Multi-Armed Bandit (MAB) problem. In this setting, a set of agents are connected by a graph, representing an information-sharing network among them. At each time, each agent (a node in the social network graph) chooses an action (or \emph{arm}) from a finite set of actions, and receives a stochastic reward corresponding to the chosen arm, from an
unknown probability distribution. In addition, each agent shares the action index and the corresponding reward sample instantaneously with its neighbours in the graph. The agents are interested in maximising (minimising) their net cumulative reward (regret) over time. When there is only one learning agent, our setting is identical to the classical multi-armed bandit problem, which is a widely-studied framework for sequential learning \cite{Lai,Auer}.

Our framework is motivated by scenarios that involve multiple decision
makers acting under uncertainty towards optimising a common goal. One
such example is that of a large-scale distributed recommendation
system, in which a network of backend servers handles user traffic in a
concurrent fashion. Each user session is routed to one of the servers
running a local recommendation algorithm. Due to the high volume of
recommendation requests to be served, bandwidth and computational constraints may
preclude a central processor from having access to the 
observations from all sessions, and issuing recommendations
simultaneously to them in real time. In this situation, the servers
must resort to using low-rate information from their neighbours to
improve their learning, which makes this a collaborative networked
bandit setting.

Another application scenario is that of cooperative transportation routing
with mobile applications that provide social network overlays,  like
Waze~\cite{waze}. A user in this system is typically interested in
taking the fastest or most efficient route through a city, with her
app offering a choice of routes, and also recording observations
from past choices. In addition, users can also add other trusted users as
friends, whose observations then become available as additional
information for future decision making. The social network among the
users thus facilitates local information exchange, which could
help users optimise their future decisions (choices of routes) faster.

In our setting, the agents use their social network to aid their
learning task, by sharing their action and reward samples with their
immediate neighbours in the graph. It seems reasonable that this
additional statistical information can potentially help the agents to
optimize their rewards faster than they would if they were completely
isolated. Indeed, several interesting questions arise in this
collaborative learning framework. For example, how does the structure
of the social network affect the rate at which the agents can learn?
Can good learning policies for the single agent setting be extended
naturally to perform well in the collaborative setting? Can agents
exploit their `place' in the network to learn more efficiently? Can
`more `privileged' agents (e.g., nodes with high degree or influence)
help other agents learn faster? This work investigates and answers
some of these questions analytically and experimentally.

\subsection{Our Contributions} 
We consider the collaborative bandit learning scenario, and analyse the total regret incurred by the agents (regret of the network) over a long but finite horizon $n$. Our specific contributions in this paper are as follows. 

We first introduce and analyse the expected regret of the UCB-Network
policy, wherein all the agents employ an extension of the celebrated
UCB1~\cite{Auer} policy. In this case, we derive an upper bound on the
expected regret of a generic network. The upper bound involves a
graph-dependent constant, which is obtained as the solution to a
combinatorial optimisation problem. We then specialize the upper bound
to common network topologies such as the fully connected and the star graphs, in
order to highlight the impact of the social network structure on the
derived upper bound.

Second, we derive a universal lower bound on the expected regret of a generic network, for a large class of `reasonable' policies. This lower bound is based on fundamental statistical limits on the learning rate, and is independent of the network structure. To incorporate the network structure, we derive another lower bound on the expected regret of a generic network, as a function of a graph dependent parameter. This bound holds for the class of  \emph{non-altruistic and \network-oblivious} (NAIC) policies, which includes appropriate extensions of well-studied single agent learning policies, such as UCB1~\cite{Auer} and Thompson sampling~\cite{Shipra} to a network setting. We then observe that the gap between the derived lower bound for the NAIC class of policies, and the upper bound of the UCB-Network policy can be quite large, even for a simple star network\footnote{Our special interest in star graphs is motivated by the fact that social networks often posses a hub-and-spoke structure, where the star is a commonly occurring motif.}.     

Third, we consider the class of star networks, and derive a refined
lower bound on the expected regret of a large star network for NAIC
policies. We observe that this refined lower bound matches (in an
order sense) the upper bound of the UCB-Network. We thus
conclude that widely-studied sequential learning policies (NAIC) which
perform well in the single agent setting, may perform poorly in terms
of the expected regret of the network when used in a network setting,
especially when the network is highly hierarchical.

Next, motivated by the intuition built from our bounds, we seek
policies which can exploit the social network structure in order to
improve the learning rates. In particular, for an $m$-node star
network, we propose a Follow Your Leader (FYL) policy, which exploits
the centre node's role as an `information hub'. We show that the
proposed policy suffers a regret which is smaller by a factor of $m$
compared to that of any NAIC policy. In particular, the network-wide
regret for the star-network under the FYL policy matches (in an order
sense) the universal lower bound on regret. This serves to confirm
that using the centre node's privileged role is the right information
structure to exploit in a star network.

Finally, we extend the above insights to a generic network. To this end, we make a connection between the smallest \emph{dominating set} of the network, and the achievable regret under the FYL policy. In particular, we show that the expected regret of the network is upper bounded by the product of the \emph{domination number} and the expected regret of a single isolated agent.

In sum, our results on the collaborative bandit learning show that policies that exploit the network structure often suffer substantially lesser expected regret, compared to single-agent policies extended to a network setting. 

\subsection{Related Work}
\label{Relatedwork}

There is a substantial body of work that deals with the learning of various types of single agent MAB problems~\cite{Agarwal,Auer,Lai,Auer3,Bubeck}. However, there is relatively little work on the learning of stochastic MAB problems by multiple agents. Distributed learning of a MAB problem by multiple agents has been studied in the context of a cognitive radio frame work in~\cite{Keqin,Anima,Naumann}. Unlike these models, a key novelty in our model is that it incorporates information sharing among the agents since they are connected by a network.
In ~\cite{Shuang}, the authors assume that each player, in each round, has access to the entire history corresponding to the actions and the rewards of all users in the network -- this is a special case of our generic user network model. In \cite{cesa2016delay}, the authors deal with the learning of adversarial MAB problem by multiple agents connected through a network.

The primary focus in~\cite{Buccapatnam2} is centralized learning, wherein an external agent chooses the actions for the users in the network. 
The learning of the stochastic MAB problem by multiple users has also been addressed from a game-theoretic perspective in~\cite{Buccapatnam1}; the randomised algorithm proposed therein uses the parameters of the MAB problem, which are unknown to the algorithm in practice. In contrast, we propose deterministic algorithms that do not require these parameters.

 In a class of MAB problems considered in ~\cite{caron2012leveraging,Mannor,Alon}, a sole learning agent receives side observations in each round from \emph{other arms}, in addition to samples from the chosen arm. Another related paper is~\cite{kar} -- here, the model consists of a single major bandit (agent)  and a set of minor bandits. While the major bandit observes its rewards, the minor bandits can only observe the actions of the major bandit. However, the bandits are allowed to exchange messages with their neighbours, to receive the reward information of the major bandit. Clearly, the models described above are rather different from the setting we consider in this work.

{\em Organization.} We describe the system model in
Section~\ref{Model}. Section~\ref{UCB-Network} presents the regret
analysis of the UCB-Network policy. Lower bounds
on the expected regret of the network under certain classes of
policies are presented in Section~\ref{Lowerbound1}. Section~\ref{FYL} presents the regret analysis of the FYL policy. Numerical results are presented in
Section~\ref{Results}, and Section~\ref{Conclusions} concludes the
paper. 
\section{System Model}
\label{Model}
We first briefly outline the single agent stochastic MAB problem. Let $\mathcal{K} = \lbrace 1, 2,\dots, K \rbrace$ be the set of arms available to the agent. Each arm is associated with a distribution, independent of others, say $\mathcal{P}_1, \mathcal{P}_2, \dots, \mathcal{P}_K$, and let $\mu_1,\mu_2, \dots,\mu_K$ be the corresponding means, unknown to the agent. Let $n$ be the time horizon or the total number of rounds. In each round $t$, the agent chooses an arm, for which he receives a reward, an i.i.d. sample drawn from the chosen arm's distribution. The agent can use the knowledge of the chosen arms and the corresponding rewards upto round $(t-1)$ to select an arm in round $t.$ The goal of the agent is to maximize the cumulative expected reward up to round $n$. 

Now, we present the model considered in this paper. We consider a set of users $V$ connected by an undirected fixed network $G=(V,E)$ \footnote{We use the adjacency matrix $A$ to represent the network $G$. If $(i,j) \in E$ then $A(i,j) = A(j,i) = 1$, otherwise $A(i,j) = A(j,i) =0$. We assume that $A(i,i) = 1\,\, \forall i \in V$.}, with $| V | = m$. Assume that each user is learning the same stochastic MAB problem i.e., faces a choice in each time from among the same set of arms $\mathcal{K}$. In the $t^{th}$ round, each user $v$ chooses an arm, denoted by $a^v(t) \in \mathcal{K}$, and receives a reward, denoted by $X^v_{a^v(t)}(t)$, an i.i.d. sample drawn from $\mathcal{P}_{a^v(t)}$. In the  stochastic MAB problem set-up, for a given user $v$, the rewards from arm $i$, denoted by $\{X^v_i(t): t = 1, 2, \ldots \}$, are i.i.d. across rounds. Moreover, the rewards from distinct arms $i$ and $j$, $X^v_i(t)$, $X^v_j(s)$, are independent. If multiple users choose the same action in a certain round, then each of them gets an independent reward sample drawn from the chosen arm's distribution. We use the subscripts $i$, $v$ and $t$ for arms, nodes and time respectively. The information structure available to each user is as follows. A user $v$ can observe the actions and the respective rewards of itself and its one hop neighbours in round $t$, before deciding the action for round $(t+1)$.   

The policy $\Phi^v$ followed by a user prescribes actions at each time $t,$ $\Phi^v(t): H^v(t) \rightarrow \mathcal{K},$ where $H^v(t)$ is the information available with the user till round $t.$ A policy of the network $G,$ denoted by $\Phi,$ comprises of the policies pertaining to all users in $G.$ The performance of a policy is quantified by a real-valued random variable, called \textit{regret}, defined as follows. The regret incurred by user $v$ for using the policy $\Phi^v$ upto round $n$ is defined as,
\begin{equation*}
\label{eq:2.1}
R^v_{\Phi}(n) = \sum\limits_{t=1}^n \left( \mu^* - \mu_{a^v(t)} \right)  = n \mu^* - \sum\limits_{t=1}^n \mu_{a^v(t)},
\end{equation*}
where $a^v(t)$ is the action chosen by the policy $\Phi^v$ at time $t$, and $\mu^* = \max\limits_{1 \leq i \leq K} \mu_i$. We refer to the arm with the highest expected reward as the optimal arm. The regret of the entire network $G$ under the policy $\Phi$ is denoted by $R^G_{\Phi}(n)$, and is defined as the sum of the regrets of all users in $G$. The expected regret of the network is given by: 
\begin{equation}
\label{eq:2.3}
\mathbb{E} [ R^G_{\Phi}(n) ] = \sum \limits_{v \in V} \sum\limits_{i=1}^K \Delta_i \mathbb{E}  [ T_i^v(n) ],
\end{equation}
where $\Delta_i = \mu^* - \mu_i$, and $T_i^v(n)$ is the number of times arm $i$ has been chosen by $\Phi^v$ upto round $n$. We omit $\Phi$ from the regret notation, whenever the policy can be understood from the context. Our goal is to devise learning policies in order to minimise the expected regret of the network. 

Let $\mathcal{N}(v)$ denote the set consisting of the node $v$ and its one-hop neighbours. Let $m^v_i(t)$ be the number of times arm $i$ has been chosen by node $v$ and its one-hop neighbours till round $t$, and  $\hat{\mu}_{m_i^v(t)}$ be the average of the corresponding reward samples. These are given as:
\begin{align*}
m^v_i(t)  &= \sum\limits_{u \in \mathcal{N}(v)} T_i^u(t) \\
\hat{\mu}_{m_i^v(t)} &=\frac{1}{{m_i^v(t)}} \sum\limits_{u \in \mathcal{N}(v)} \sum\limits_{k=1}^{t} X_{a^u(k)}^u(k) \mathbb{I} \lbrace a^u(k) = i \rbrace,
\end{align*}  
where $\mathbb{I}$ denotes the indicator function. We use $m^G_i(t)$ to denote the number of times arm $i$ has been chosen by all nodes in the network till round $t$.

\begin{figure}
\begin{center}
\vspace{0.25cm}	
\begin{tikzpicture}[thick, scale=0.35]	
	
\draw[] (1,0) circle (.3); \node at (1,0) {\tiny 1};
\draw[] (3,0) circle (.3); \node at (3,0) {\tiny 2};
\draw[] (0,2) circle (.3); \node at (0,2) {\tiny 5};
\draw[] (4,2) circle (.3); \node at (4,2) {\tiny 3};
\draw[] (2,4) circle (.3); \node at (2,4) {\tiny 4};
\draw[line width=.01cm] [-](1.15,0)--(2.85,0);
\draw[line width=.01cm] [-](0.9,0.1)--(0,1.85);
\draw[line width=.01cm] [-](3.1,0.1)--(4,1.85);
\draw[line width=.01cm] [-](0,2.15)--(1.85,4);
\draw[line width=.01cm] [-](2.2,4)--(4,2.15);
\draw[line width=.01cm] [-](2.85,0.1)--(0.15,2);
\draw[line width=.01cm] [-](1.15,0.1)--(3.85,2);
\draw[line width=.01cm] [-](0.15,2)--(3.85,2);
\draw[line width=.01cm] [-](2,3.85)--(1,0.15);
\draw[line width=.01cm] [-](2,3.85)--(3,0.15);
\draw[line width=.01cm] [-](-0.5,-0.5)--(-0.5,4.5);
\draw[line width=.01cm] [-](4.5,-0.5)--(4.5,4.5);
\draw[line width=.01cm] [-](-0.5,-0.5)--(4.5,-0.5);
\draw[line width =.01cm][-](-0.5,4.5)--(4.5,4.5);
\node at (2, -1) {(a)};

\draw[] (7,0) circle (.3); \node at (7,0) {\tiny 1};
\draw[] (9,0) circle (.3); \node at (9,0) {\tiny 2};
\draw[] (6,2) circle (.3); \node at (6,2) {\tiny 5};
\draw[] (10,2) circle (.3); \node at (10,2) {\tiny 3};
\draw[] (8,4) circle (.3); \node at (8,4) {\tiny 4};
\draw[line width=.01cm] [-](7.15,0)--(8.85,0);
\draw[line width=.01cm] [-](6.85,0.15)--(6,1.85);
\draw[line width=.01cm] [-](9.15,0.15)--(10,1.85);
\draw[line width=.01cm] [-](6,2.15)--(7.85,4);
\draw[line width=.01cm] [-](8.15,4)--(9.85,2.15);
\draw[line width=.01cm] [-](5.5,-0.5)--(5.5,4.5);
\draw[line width=.01cm] [-](5.5,4.5)--(10.5,4.5);
\draw[line width=.01cm] [-](10.5,4.5)--(10.5,-0.5);
\draw[line width =.01cm][-](10.5,-0.5)--(5.5,-0.5);
\node at (8, -1) {(b)};

\draw[] (12,0) circle (.3); \node at (12,0) {\tiny 5};
\draw[] (16,0) circle (.3); \node at (16,0) {\tiny 4};
\draw[] (12,4) circle (.3); \node at (12,4) {\tiny 3}; 
\draw[] (16,4) circle (.3); \node at (16,4) {\tiny 2};
\draw[] (14,2) circle (.3); \node at (14,2) {\tiny 1};
\draw[line width=.01cm] [-](13.85,1.9)--(12,0.15);
\draw[line width=.01cm] [-](13.85,2.15)--(12.15,3.85);
\draw[line width=.01cm] [-](14.15,1.85)--(15.85,0.15);
\draw[line width=.01cm] [-](14.15,2.15)--(15.85,3.85);
\draw[line width=.01cm] [-](11.5,-0.5)--(11.5,4.5);
\draw[line width=.01cm] [-](11.5,4.5)--(16.5,4.5);
\draw[line width=.01cm] [-](16.5,4.5)--(16.5,-0.5);
\draw[line width =.01cm][-](16.5,-0.5)--(11.5,-0.5);
\node at (14, -1) {(c)};

\draw[] (18,0) circle (.3);\node at (18,0) {\tiny 1};
\draw[] (22,0) circle (.3);\node at (22,0) {\tiny 2};
\draw[] (18,4) circle (.3);\node at (18,4) {\tiny 3};
\draw[] (22,4) circle (.3);\node at (22,4) {\tiny 4};
\draw[] (20,2) circle (.3);\node at (20,2) {\tiny 5};
\draw[line width=.01cm] [-](17.5,-0.5)--(17.5,4.5);
\draw[line width=.01cm] [-](17.5,4.5)--(22.5,4.5);
\draw[line width=.01cm] [-](22.5,4.5)--(22.5,-0.5);
\draw[line width =.01cm][-](22.5,-0.5)--(17.5,-0.5);
\node at (20, -1) {(d)};
\end{tikzpicture}
\end{center}
\caption{Various 5-node user networks. (a)~fully connected (b)~circular (c)~star (d)~fully disconnected}
\label{Fig:1}
\end{figure}
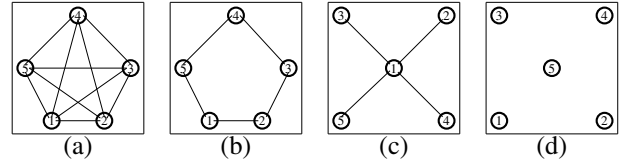
\section{The UCB-Network policy}
\label{UCB-Network}
Motivated by the well-known single agent policy UCB1~\cite{Auer}, we propose a distributed policy called the UCB-user. This is a deterministic policy, since, for a given action and reward history, the action chosen is deterministic. When each user in the network follows the UCB-user policy, we term the network policy as UCB-Network which is outlined in Algorithm~\ref{alg:UCB-Network}. 
\begin{algorithm}[htb]
   \caption{Upper-Confidence-Bound-Network (UCB-Network)}
   \label{alg:UCB-Network}
\begin{algorithmic}
   \STATE {Each user in $G$ follows UCB-user policy}  
   \STATE{\bfseries UCB-user policy for a user $v$:}
   \STATE{\bfseries Initialization:} For $1 \leq t \leq K$
   \STATE{- play arm $t$} 
   \STATE{\bfseries Loop:} For $K \leq t \leq n$
   \STATE - $a^v(t+1) = \underset{j}{\operatorname{argmax}} \, \, \hat{\mu}_{m_j^v(t)} + \sqrt{\frac{2 \ln t}{m_j^v(t)}}$
\end{algorithmic}
\end{algorithm}

The following theorem presents an upper bound on the expected regret of a generic network, under the UCB-Network policy.
\begin{theorem}
\label{Thm:3.1}
Assume that the network $G$ follows the UCB-Network policy to learn a stochastic MAB problem with $K$ arms. Further, assume that the rewards lie in $[0,1]$. Then,
\begin{itemize}
\item[(i)] The expected total regret of $G$ is upper bounded as:
\begin{equation*}
\label{eq:3.1}
\mathbb{E} \left[ R^G(n) \right] \leq \sum\limits_{i:\mu_i < \mu^*} \left[ C_G \frac{8 \ln n}{\Delta_i} + C_G \Delta_i \right] + b,
\end{equation*}
where $\Delta_i = \mu^* - \mu_i$, $\beta \in (0.25,1)$, $b~=~m \left( \frac{2}{4\beta -1} + \frac{2}{(4\beta-1)^2 \ln(1/\beta)} \right) \left( \sum\limits_{j=1}^K \Delta_j \right),$ and 
$C_G$ is a network dependent parameter, defined as follows.
\item[(ii)] Let $\gamma_k = \min \lbrace t \in \lbrace 1, \dots, n \rbrace : \vert \lbrace v \in V : m^v_i(t) \geq l_i = \frac{8 \ln n}{\Delta_i^2} \rbrace \vert \geq k \rbrace$ denote the smallest time index when at least $k$ nodes have access to at least $l_i$ samples of arm $i$. Let $\eta_k$ be the index of the `latest' node to acquire  $l_i$ samples of arm $i$ at $\gamma_k,$ such that $\eta_k \neq \eta_{k'}$ for $1 \leq k, k' \leq m$. Define $z_k = T_i(\gamma_k) := \left( T^1_i(\gamma_k), \dots, T^m_i(\gamma_k) \right)$, which contains the arm $i$ counts of all nodes at time $\gamma_k$. Then, $C_G l_i$ is the solution of the following optimisation problem:
\begin{equation}
\label{OptimizationProblem11}
\begin{aligned}
& \max \hspace{2mm} \Vert z_m \Vert_1 \\
& \text{s.t $\exists$ a sequence $\lbrace z_k \rbrace_{k=1}^m$} \\
& z_j(\eta_k) = z_k(\eta_k) \hspace{2mm} \forall j \geq k\\
& \langle z_k, A(\eta_k,:) \rangle \geq l_i, \hspace{2mm} 1\leq k \leq m
\end{aligned}
\end{equation}
\end{itemize}
\end{theorem}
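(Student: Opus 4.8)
The plan is to lift the classical single-agent UCB1 analysis of~\cite{Auer} to the network, the essential new ingredient being a combinatorial accounting of how shared neighbourhood observations curtail each node's exploration of a suboptimal arm. By the regret decomposition~\eqref{eq:2.3}, it suffices to bound $\sum_{v} \mathbb{E}[T_i^v(n)] = \mathbb{E}[m_i^G(n)]$ for each suboptimal arm $i$, then multiply by $\Delta_i$ and sum over $i$. The starting observation is that node $v$ selects arm $i$ in round $t+1$ only when its neighbourhood UCB index for $i$, namely $\hat{\mu}_{m_i^v(t)} + \sqrt{2\ln t / m_i^v(t)}$, exceeds the corresponding index of the optimal arm. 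Following the single-agent template, I would split the rounds in which $v$ plays $i$ into \emph{exploratory} rounds, where the neighbourhood count satisfies $m_i^v(t) < l_i = 8\ln n / \Delta_i^2$, and rounds where $m_i^v(t) \ge l_i$.

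For the latter, the event that arm $i$'s neighbourhood index beats the optimal arm's index forces a large-deviation event on at least one of the two empirical neighbourhood means. Since $\hat{\mu}_{m_i^v(t)}$ averages $m_i^v(t)$ i.i.d. bounded samples (the reward samples are independent across nodes and rounds), I would apply Hoeffding's inequality, union bound over the sample index and over $t$, and then sum over the $m$ nodes and all arms. Introducing the free parameter $\beta \in (0.25,1)$ to weight the range of the sample index produces a convergent series whose total is exactly the additive constant $b$ displayed in part (i).

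The crux is the exploratory rounds: these are coupled across nodes, because $m_i^v(t)$ aggregates the arm-$i$ counts of $v$ and all its neighbours, so samples gathered by one node reduce the exploration required of its neighbours. To bound the aggregate exploration I would track the ordered instants $\gamma_1 \le \cdots \le \gamma_m$ at which successive nodes' neighbourhood counts first reach $l_i$, with $\eta_k$ the $k$-th node to do so. Once node $\eta_k$'s neighbourhood holds $l_i$ samples of arm $i$ at time $\gamma_k$, it ceases to explore $i$ (up to the deviation events already charged above), so its own count is frozen thereafter; this is precisely the constraint $z_j(\eta_k) = z_k(\eta_k)$ for $j \ge k$, while the crossing condition reads $\langle z_k, A(\eta_k,:)\rangle \ge l_i$. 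The total number of exploratory arm-$i$ plays across the network is then $\Vert z_m \Vert_1$, and maximising this over all count-trajectories consistent with these constraints gives the worst-case bound, defined to be $C_G l_i$ in part (ii) (plus an $O(C_G)$ overshoot from the last batch of plays at each crossing, which contributes the $C_G \Delta_i$ piece after multiplication by $\Delta_i$). Assembling the pieces yields $\mathbb{E}[m_i^G(n)] \le C_G l_i + O(C_G) + (\text{deviation terms})$ and hence the claimed bound.

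I expect the combinatorial step to be the main obstacle. The delicate points are: (i) arguing sample-path-wise that a node effectively stops exploring arm $i$ the instant its neighbourhood threshold is crossed, so that the freezing constraints are valid; (ii) verifying that the worst case over the random ordering in which neighbourhoods fill up is faithfully captured by the optimisation~\eqref{OptimizationProblem11}, including the overshoot incurred when a count crosses $l_i$; and (iii) handling overlapping neighbourhoods so that shared samples are neither double-counted nor lost. Cleanly separating this deterministic combinatorial bound on the exploratory plays from the probabilistic deviation terms is the technical heart of the argument.
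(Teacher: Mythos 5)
Your plan tracks the paper's proof almost step for step: the same per-arm decomposition via \eqref{eq:2.3}, the same split of each node's plays at the threshold $l_i$ (the paper's stopping times $\tau^v_i$), the same peeling-with-parameter-$\beta$ concentration bound for the post-threshold plays (the paper's Lemma~\ref{Thm:A.1}, which yields the constant $b$), and the same worst-case combinatorial accounting of the pre-threshold plays via the crossing times $\gamma_k$ and the optimisation \eqref{OptimizationProblem11} (the paper's Lemma~\ref{Thm:A.2}). The three delicate points you flag at the end are exactly the ones that Lemma~\ref{Thm:A.2} is designed to handle.

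There is, however, one ingredient you elide that is not cosmetic. You assert that $\hat{\mu}_{m^v_i(t)}$ ``averages $m^v_i(t)$ i.i.d. bounded samples'' and that Hoeffding plus a union bound over the sample index suffices. The difficulty is that at a fixed value $s$ of the neighbourhood count, the set of samples composing the average is path-dependent: which neighbour contributed which sample is decided adaptively by the policy, so the neighbourhood average at count $s$ is not a priori the mean of a fixed prefix of a single i.i.d. sequence --- which is what both the union bound over $s$ and, especially, Hoeffding's \emph{maximal} inequality (used with the geometric-grid peeling that produces the $\beta$-dependent constants) require. The paper resolves this with a separate coupling argument (Lemma~\ref{Lemma:coupling}): it constructs a new probability space in which node $v$ and all of its neighbours consume rewards of each arm from one common i.i.d. stack $\lbrace Y_i(k) \rbrace$, and proves that the law of the entire action--reward process is unchanged; only then is the neighbourhood average at count $s$ literally the $s$-th partial average of a fixed i.i.d. sequence, and the maximal inequality applicable. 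This device is implicit in single-agent UCB proofs, but with several nodes playing simultaneously it genuinely needs proof. Adding this coupling step (or an equivalent martingale/optional-sampling argument) would close the gap; everything else in your outline matches the paper's argument.
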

\vspace{3mm}
\begin{proof}
Refer Appendix A.
\end{proof}

\emph{Interpretation of \eqref{OptimizationProblem11}}: Under the UCB-Network policy, suppose a node has acquired at least $l_i$ samples of a sub-optimal arm $i$. As shown in the Lemma~\ref{Thm:A.1} in the Appendix A that such a node will not play the sub-optimal arm $i$ subsequently with high probability. Next, note that, $z_k$ is a vector of arm $i$ counts (self plays) of all nodes at time $\gamma_k$. The objective function in \eqref{OptimizationProblem11} represents the sum of arm $i$ counts of all nodes at the smallest time index, when all nodes have access to at least $l_i$ samples of arm $i$. The solution to~\eqref{OptimizationProblem11} represents the maximum number of samples of arm $i$ required by the entire network such that~ $(a)$ Each node has access to at least $l_i$ samples of arm $i$ (the last constraint in~\eqref{OptimizationProblem11}), and $(b)$~Each node stops choosing arm $i$ after it has access to $l_i$ samples of it (the penultimate constraint in~\eqref{OptimizationProblem11}). 

For example, the solution to \eqref{OptimizationProblem11} for an $m$-node star network (shown in Fig.~\ref{Fig:1}) is $(m-1)l_i$. This corresponds to the scenario where the center node never chooses the sub-optimal arm $i$, and each leaf node chooses it $l_i$ times.  

\emph{Proof sketch:} First, we show that any node $v$ plays any sub-optimal arm $i$ in a given round $t$ with small probability after it has $l_i$ samples of it, in Lemma~\ref{Thm:A.1}. Using Lemma~\ref{Thm:A.1}, we  then upper bound the expected regret of the network after each node has $l_i$ samples of the sub-optimal arm $i$. Next, we upper bound the maximum number of samples of the sub-optimal arm $i$ required by the entire network such that each node has access to $l_i$ samples of it, in Lemma~\ref{Thm:A.2}. Finally, we obtain the desired upper bound by combining Lemma ~\ref{Thm:A.1} and Lemma ~\ref{Thm:A.2}. A detailed proof, along with Lemma~\ref{Thm:A.1} and ~\ref{Thm:A.2} is given in the Appendix A.
\subsection{Application to typical networks}
Solving \eqref{OptimizationProblem11} for an arbitrary network is analytically complex.  Hence, we solve the problem for a few specific  networks that range from high connectivity to low connectivity; namely, the $m$-node Fully Connected (FC), circular, star and Fully Disconnected (FD) networks. For $m = 5$, these networks are shown in Fig. 1. It is easy to verify that the solution to \eqref{OptimizationProblem11} for these four networks are $l_i$, $(m-1)l_i$, $\lfloor \frac{m}{2} \rfloor l_i$ and $ml_i,$ respectively. We can then evaluate the upper bounds in Theorem \ref{Thm:3.1}. 
\\
\textbf{Corollary 1}
For an $m$-node FC network:
\begin{equation}
\mathbb{E}[R^G(n)] \leq \sum\limits_{i:\mu_i < \mu^*} \left( \frac{8 \ln n}{\Delta_i} + \Delta_i \right) + b.
\end{equation} 
\textbf{Corollary 2}
For an $m$-node circular network:
\begin{equation}
\mathbb{E}[R^G(n)] \leq \Big\lfloor \frac{m}{2} \Big \rfloor \sum\limits_{i:\mu_i < \mu^*} \left( \frac{8 \ln n}{\Delta_i} + \Delta_i \right) + b.
\end{equation}
\textbf{Corollary 3}
For an $m$-node star network:
\begin{equation}
\mathbb{E}[R^G(n)] \leq (m-1) \sum\limits_{i:\mu_i < \mu^*} \left( \frac{8 \ln n}{\Delta_i} + \Delta_i \right)  + b.
\end{equation} 
\textbf{Corollary 4}
For an $m$-node FD network:
\begin{equation}
\mathbb{E}[R^G(n)] \leq m \sum\limits_{i:\mu_i < \mu^*} \left( \frac{8 \ln n}{\Delta_i} + \Delta_i \right) + b.
\end{equation} 

A key insight can be obtained from the above corollaries is that, the expected regret of a network decreases by a factor of $m$, $2$ and $m/(m-1)$ in the cases of $m$-node FC, circular and star networks respectively, compared to FD network.  
\section{Lower bounds on the expected regret}
\label{Lowerbound1}
In this section, we derive lower bounds on the expected regret of the network under various classes of policies. Our first lower bound is a universal bound which is independent of the user network, and holds for large class of `reasonable' learning policies. Second, we derive a network-dependent lower bound for a class of \emph{Non-Altruistic and Individually Consistent} (NAIC) policies -- a class that includes network extensions of well-studied policies like UCB1 and Thompson sampling. Finally, we derive a refined lower bound for large star networks under NAIC policies.

Throughout this section, we assume that the distribution of each arm is parametrised by a single parameter. We use $\boldsymbol{\theta} = \left( \theta_1, \dots, \theta_K \right) \in \Theta^K = \boldsymbol{\Theta} $ to denote the parameters of arms $1$ to $K$ respectively. 
Suppose $f(x;\theta_j)$ be the reward distribution for arm $j$ with parameter $\theta_j$.  Let $\mu(\theta_j)$ be the mean of arm $j$, and $\theta^* = \underset{1 \leq j \leq K}{\arg\max}~\mu(\theta_j)$. Define the parameter sets for an arm $j$ as   
\begin{equation*} 
\begin{aligned}
\boldsymbol{\Theta_j} &= \lbrace \boldsymbol{\theta} : \mu(\theta_j) < \max_{i \neq j} \mu (\theta_i) \rbrace \\
\boldsymbol{\Theta_j^*} &= \lbrace \boldsymbol{\theta} : \mu(\theta_j) > \max_{i \neq j} \mu (\theta_i) \rbrace
\end{aligned}  
\end{equation*}    

Note that $\boldsymbol{\Theta_j}$ contains all parameter vectors in which the arm $j$ is a sub-optimal arm, and $\boldsymbol{\Theta_j^*}$ contains all parameter vectors in which the arm $j$ is the optimal arm. Let $kl(\beta || \lambda)$ be the KL divergence of the distribution parametrised by $\lambda$, from the distribution parametrised by $\beta$.

\emph{[A1]} We assume that the set $\Theta$ and $kl(\beta || \lambda)$ satisfy the following~\cite{Lai}:
\begin{itemize}
\item[(i)] $f(.;.)$ is such that $0 < kl(\beta || \lambda) < \infty$ whenever $\mu(\lambda) > \mu(\beta).$ 
\item[(ii)] $\forall \epsilon >0$ and $\forall \beta, \lambda$ such that $\mu(\lambda) > \mu(\beta), \exists \delta = \delta(\epsilon,\beta,\lambda) > 0$ for which $| kl(\beta || \lambda) - kl(\beta || \lambda')| <\epsilon$ whenever $\mu(\lambda) \leq \mu(\lambda') \leq \mu(\lambda) + \delta.$
\item[(iii)] $\Theta$ is such that $\forall \lambda \in \Theta$ and $\forall \delta > 0, \exists \lambda' \in \Theta$ such that $\mu(\lambda) < \mu(\lambda') < \mu(\lambda) + \delta.$
\end{itemize}  
\begin{theorem}
\label{Thm:4.4}
Let $G$ be an $m$-node connected generic network, and suppose \emph{[A1]} holds. Consider the set of policies for users in $G$ to learn a $K$-arm stochastic MAB problem with a parameter vector of arms as $\boldsymbol{\theta} \in \boldsymbol{\Theta}$ such that $\mathbb{E}_{\boldsymbol{\theta}}[m_j^G(n)] = o(n^c)$ $\forall$ $c>0$, for any sub-optimal arm $j$. Then, for $\delta \in (0,1)$, the following holds.
\begin{equation}
\label{eq:UniversalLowerbound}
\liminf \limits_{n \rightarrow \infty}\frac{\mathbb{E}_{\boldsymbol{\theta}} [m_j^G(n)]}{\ln n} \geq  \frac{1-\delta}{1+\delta} \cdot \frac{1}{kl(\theta_j || \theta^*)}. 
\end{equation}
\end{theorem}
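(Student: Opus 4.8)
The plan is to adapt the classical change-of-measure argument of Lai and Robbins to the network setting. The crucial observation is that the reward samples of a fixed arm $j$, pooled across all $m$ nodes, constitute a single i.i.d.\ sequence, so that the network count $m_j^G(n)$ plays exactly the role that the single-agent pull count plays in the standard proof; the network structure will therefore enter only through the fact that the total number of plays available to the network in $n$ rounds is $mn$ rather than $n$.

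First I would fix a sub-optimal arm $j$ and use assumptions [A1](ii) and [A1](iii) to construct an alternative one-dimensional parameter $\lambda$ satisfying $\mu(\lambda) > \mu(\theta^*)$ together with $kl(\theta_j || \lambda) < (1+\delta)\, kl(\theta_j || \theta^*)$: part~(iii) supplies a parameter whose mean exceeds $\mu(\theta^*)$ by an arbitrarily small margin, and part~(ii) ensures the divergence varies continuously, so it can be kept within a factor $(1+\delta)$ of $kl(\theta_j || \theta^*)$. Let $\boldsymbol{\lambda}$ denote the instance that agrees with $\boldsymbol{\theta}$ in every coordinate except that arm $j$ carries parameter $\lambda$; under $\boldsymbol{\lambda}$, arm $j$ is the unique optimal arm, while every other arm is now sub-optimal.

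Next comes the likelihood-ratio reduction. Writing the joint law of the entire network trajectory (all actions and all rewards of all $m$ nodes up to round $n$) under $\boldsymbol{\theta}$ and under $\boldsymbol{\lambda}$, the action-selection factors coincide because the policies are the same functions of the observed histories and do not depend on the unknown parameters, and the reward factors cancel for every arm other than $j$. Hence the log-likelihood ratio reduces to $\sum_{k=1}^{m_j^G(n)} \log\bigl(f(Y_k;\theta_j)/f(Y_k;\lambda)\bigr)$, a sum of $m_j^G(n)$ i.i.d.\ terms each with mean $kl(\theta_j||\lambda)$ under $\boldsymbol{\theta}$. I would then introduce the ``few-plays'' event $C_n = \{\, m_j^G(n) < (1-\delta)\ln n / kl(\theta_j||\lambda) \,\}$ and aim to show $\mathbb{P}_{\boldsymbol{\theta}}(C_n) \to 0$. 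Under $\boldsymbol{\lambda}$ this is immediate: arm $j$ being optimal, the uniformly-good hypothesis forces $\sum_{i \neq j} m_i^G(n) = o(n^c)$, and since the total number of network plays is $mn$, we get $m_j^G(n)$ of order $mn$, whence $\mathbb{P}_{\boldsymbol{\lambda}}(C_n) \to 0$. Transferring this to $\boldsymbol{\theta}$ via the Radon--Nikodym derivative shows that on $C_n$ the log-likelihood ratio is at most $(1-\delta)\ln n$, so $dP_{\boldsymbol{\lambda}}/dP_{\boldsymbol{\theta}}$ exceeds $n^{-(1-\delta)}$ there; a non-vanishing $\mathbb{P}_{\boldsymbol{\theta}}(C_n)$ would then contradict $\mathbb{P}_{\boldsymbol{\lambda}}(C_n)\to 0$.

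Finally, from $\mathbb{P}_{\boldsymbol{\theta}}(C_n) \to 0$ and the bound $m_j^G(n) \geq \tfrac{(1-\delta)\ln n}{kl(\theta_j||\lambda)}\,\mathbb{I}\{C_n^c\}$ I would obtain $\mathbb{E}_{\boldsymbol{\theta}}[m_j^G(n)] \geq (1-\mathbb{P}_{\boldsymbol{\theta}}(C_n))\,(1-\delta)\ln n / kl(\theta_j||\lambda)$, divide by $\ln n$, take the liminf, and apply $kl(\theta_j||\lambda) < (1+\delta)\,kl(\theta_j||\theta^*)$ to recover the claimed $\tfrac{1-\delta}{1+\delta}\cdot\tfrac{1}{kl(\theta_j||\theta^*)}$ bound. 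The main obstacle is precisely the transfer step: because the number of arm-$j$ samples is random and $C_n$ constrains it to be small, one cannot simply evaluate the strong law at the random index $m_j^G(n)$. The standard remedy, which I would follow, is a maximal inequality controlling $\max_{s \leq (1-\delta)\ln n/kl(\theta_j||\lambda)} \sum_{k=1}^s \log\bigl(f(Y_k;\theta_j)/f(Y_k;\lambda)\bigr)$ uniformly over the admissible range of sample counts, guaranteeing that the log-likelihood ratio stays below $(1-\delta)\ln n$ with probability tending to one. The network itself poses no extra difficulty here, since pooling the arm-$j$ rewards across nodes preserves exactly the i.i.d.\ structure the maximal inequality requires.
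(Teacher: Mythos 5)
Your route is the same one the paper takes: the paper proves this theorem by invoking Theorem~2 of Lai and Robbins verbatim, substituting the network count $m_j^G(n)$ for the single-agent pull count $T_j(n)$ in the event $C_n$ of that proof, and your proposal is a reconstruction of exactly that change-of-measure argument. Your key structural observations are correct — the arm-$j$ rewards consumed across all nodes form a single i.i.d.\ stream, the action-selection factors cancel in the likelihood ratio, and under $\boldsymbol{\lambda}$ the uniformly-good hypothesis plus Markov's inequality (with $\sum_i m_i^G(n) = mn$) controls the few-plays probability. However, one step fails as written: you use the \emph{same} slack $(1-\delta)$ for both thresholds, i.e.\ the count threshold $g_n = (1-\delta)\ln n / kl(\theta_j || \lambda)$ and the log-likelihood-ratio threshold $(1-\delta)\ln n$. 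The maximal version of the SLLN gives $\frac{1}{g_n}\max_{s \leq g_n} \hat{kl}_s \rightarrow kl(\theta_j || \lambda)$ a.s., which says the maximal partial sum concentrates \emph{exactly at} the level $g_n \, kl(\theta_j||\lambda) = (1-\delta)\ln n$. The probability that it stays \emph{below} its own asymptotic value does not tend to one: the maximum dominates the final partial sum, which (for any arm distribution with non-degenerate variance) exceeds its mean with probability tending to $1/2$ by the CLT. So your claim that the maximal inequality guarantees the LLR stays below $(1-\delta)\ln n$ ``with probability tending to one'' is false at this exact threshold, and the transfer step collapses.

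The repair is standard and costs nothing in the final constant: introduce a second parameter $0 < a < \delta$, put the LLR threshold at $(1-a)\ln n$, and define $C_n$ as the intersection of the few-plays event with $\{\hat{kl}_{m_j^G(n)} \leq (1-a)\ln n\}$. Then $\frac{1-a}{1-\delta} > 1$ gives the strict separation that makes the maximal-SLLN step work, the change-of-measure factor becomes $n^{1-a}$, and the conclusion still reads $(1-\delta)/kl(\theta_j||\lambda) \geq \frac{1-\delta}{(1+\delta)\,kl(\theta_j||\theta^*)}$ after the continuity step. Note also that your phrasing ``a non-vanishing $\mathbb{P}_{\boldsymbol{\theta}}(C_n)$ would contradict $\mathbb{P}_{\boldsymbol{\lambda}}(C_n) \to 0$'' glosses over a rate requirement: since the transfer multiplies by $n^{1-a}$, you need $\mathbb{P}_{\boldsymbol{\lambda}}(C_n) = o(n^{a-1})$, not mere convergence to zero; this rate is exactly what Markov's inequality plus the $o(n^c)$-for-all-$c$ hypothesis under $\boldsymbol{\lambda}$ delivers. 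Both points are handled explicitly in the paper's own written-out version of this argument (Lemma~4 in Appendix~C, proved for the per-node NAIC bound), where $g_n = (1-\delta)\ln n / kl(\theta_2||\lambda)$, $h_n = (1-a)\ln n$ with $0 < a < \delta$, and the decisive step is justified ``due to $\frac{1-a}{1-\delta} > 1$ and the maximal version of the SLLN'' — precisely the strict inequality your single-parameter version is missing.
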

\vspace{3mm}
\begin{proof}
Refer Appendix B.
\end{proof}

Note that the above universal lower bound is based on fundamental statistical limitations, and is independent of the network $G$. Next, we define the class of NAIC policies, and derive a network-dependent lower bound for this class. In the rest of this section, we assume that each arm is associated with a discrete reward distribution, which assigns a non-zero probability to each possible value. 

Let $\omega$ be a sample path, which consists of all pairs of actions and the corresponding rewards of all nodes from rounds $1$ through $n$: 
\begin{equation*}
\omega = \lbrace  (a^v(t), X^v_{a^v(t)}(t)) : v \in V, 1 \leq t \leq n \rbrace.  
\end{equation*}
Also, define
\begin{align*}
\omega_v & = \lbrace (a^u(t), X^u_{a^u(t)}(t)) : u \in \mathcal{N}(v), 1 \leq t \leq n \rbrace \\ \omega_{\bar{v}} &= \lbrace (a^u(t), X^u_{a^u(t)}(t)) : u \in \mathcal{N}(v)^c, 1 \leq t \leq n \rbrace.
\end{align*}

\textbf{Definition 1} [Individually consistent policy] A policy followed by a user $v$ is said to be \emph{\network-oblivious} if, for any sub-optimal arm $i$, and for any policy of a user $u \in \mathcal{N}(v)\setminus \lbrace v \rbrace$ 
\begin{equation}
\label{eq:def1}
\mathbb{E}[T^v_i(n) \vert \omega_{\bar{v}}] = o(n^a), \,\, \forall \, a >0, \,\, \forall \, \omega_{\bar{v}}.
\end{equation}

\textbf{Definition 2} [Non-altruistic policy] A policy followed by a
user $v$ is said to be \emph{non-altruistic} if there exist $a_1,
a_2$, not depending on time horizon $n$, such that the following
holds. For any $n$ and any sub-optimal arm $i$, the expected number of
times that the policy {\em plays} arm $i$ after having {\em obtained}
$a_1 \ln n$ samples of that arm is no more than $a_2$, irrespective
of the policies followed by the other users in the network.

It can be shown that UCB-user and Thompson sampling~\cite{Shipra} are NAIC policies. In particular, we show that the UCB-user policy is an NAIC policy in Lemma~\ref{Thm:NAIC} in Appendix~A.

\emph{Example of a policy which is not \network-oblivious~:} Consider a 2-armed stochastic bandit problem with Bernoulli rewards with means $\mu_1, \mu_2$, where $\mu_1 > \mu_2$. Consider the 3-node line graph with node 2 as the center node. Let the policy followed by node 1 be as follows: $a^1(t) = a^2(t-1)$ for $t>1$ and $a^1(1) = 2$ (we call this policy \textit{follow node $2$}). Consider the following $\omega_{\bar{1}} = \lbrace (a^3(t) = 2, X^3_2(t) = 0) : 1 \leq t \leq n \rbrace$. Then, $\mathbb{E} [T^1_2(n) | \omega_{\bar{1}} ] = n$ under the node $2$'s policy as \textit{follow node $3$}, which clearly violates the equation \eqref{eq:def1}. Hence, the \emph{follow node 2} policy for node $1$ is not \network-oblivious. 

Note that the above policy, \textit{follow node $u$}, is in fact a non-trivial and rather well-performing policy that we will revisit in Section~\ref{FYL}. We now derive a network-dependent lower bound for the class of NAIC policies

\begin{theorem}
\label{Thm:4.1}
Let $G=(V, E)$ be a network with $m$ nodes, and suppose [A1] holds. If each node in $V$ follows an NAIC class policy to learn a $K$-arm stochastic MAB problem with a parameter vector of arms as $\boldsymbol{\theta} = (\theta_1, \dots, \theta_K) \in \boldsymbol{\Theta_j}$,  and $\delta \in (0,1)$ then, the following lower bounds hold:
\begin{align}
(i) & \liminf \limits_{n \rightarrow \infty} \frac{\mathbb{E}_{\boldsymbol{\theta}} [m_j^v(n) \vert \omega_{\bar{v}} ]}{\ln n} \geq \frac{1-\delta}{1 + \delta}\cdot \frac{1}{kl(\theta_j || \theta^*)}, \forall v \in V \nonumber \\ 
& \liminf \limits_{n \rightarrow \infty} \frac{\mathbb{E}_{\boldsymbol{\theta}} [m_j^v(n) ]}{\ln n} \geq \frac{1-\delta}{1 + \delta} \cdot \frac{1}{kl(\theta_j || \theta^*)}, \forall v \in V \nonumber \\ 
(ii) & \liminf\limits_{n \rightarrow \infty} \frac{\mathbb{E}_{\boldsymbol{\theta}} [m_j^G(n)]}{\ln n} \geq L_G \cdot \frac{1-\delta}{1 + \delta} \cdot \frac{1}{kl(\theta_j || \theta^*)}, \label{LB:Inconsistent}
\end{align}
where $L_G$ can be obtained from the solution to the following optimisation problem: 
\begin{equation}
\label{OptimizationProblem2}
\begin{aligned}
& \min \hspace{2mm} \Vert z_m \Vert_1 \\
& \text{s.t $\exists$ a sequence $\lbrace z_k \rbrace_{k=1}^m$} \\
& z_i(\eta_k) = z_k(\eta_k) \hspace{2mm} \forall i \geq k \\
& \langle z_k, A(n_k,:) \rangle \geq q_j = \frac{1-\delta}{1 + \delta} \cdot \frac{\ln n}{kl(\theta_j|| \theta^*)}, \hspace{2mm} \forall k.
\end{aligned}
\end{equation}
\end{theorem}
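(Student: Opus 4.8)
The plan is to establish the single-node bounds in (i) by a conditional change-of-measure argument in the spirit of Lai--Robbins and of Theorem~\ref{Thm:4.4}, but localized to the neighbourhood $\mathcal{N}(v)$, and then to aggregate these bounds across the network through the adjacency structure to obtain (ii). The individual consistency hypothesis is what permits the localization, while non-altruism is what supplies the temporal freezing that turns the aggregation into the combinatorial problem \eqref{OptimizationProblem2}.

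For the first inequality in (i), I would fix a node $v$ and an external sample path $\omega_{\bar v}$, and work throughout with the conditional law given $\omega_{\bar v}$. Starting from the true vector $\boldsymbol{\theta} \in \boldsymbol{\Theta_j}$ in which arm $j$ is sub-optimal, I would construct a perturbed vector $\boldsymbol{\theta}'$ agreeing with $\boldsymbol{\theta}$ in every coordinate except $j$, choosing $\theta_j'$ via [A1](iii) so that arm $j$ becomes optimal and via [A1](ii) so that $kl(\theta_j || \theta_j')$ is within $\epsilon$ of $kl(\theta_j || \theta^*)$. The crucial observation is that, conditioned on $\omega_{\bar v}$, the log-likelihood ratio between $P_{\boldsymbol{\theta}}$ and $P_{\boldsymbol{\theta}'}$ of the observations inside $\mathcal{N}(v)$ is a sum of exactly $m_j^v(n)$ reward-wise terms: the two measures differ only in the arm-$j$ distribution, so the full-sample ratio is a product over all arm-$j$ pulls in $V$, and dividing by the ratio of $\omega_{\bar v}$ cancels precisely the arm-$j$ pulls made by nodes in $\mathcal{N}(v)^c$, leaving the $\sum_{u\in\mathcal{N}(v)}T_j^u(n)=m_j^v(n)$ pulls inside $\mathcal{N}(v)$. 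Under $\boldsymbol{\theta}'$, individual consistency applied to each arm $i \neq j$ (now sub-optimal) gives $\sum_{i\neq j}\mathbb{E}_{\boldsymbol{\theta}'}[T_i^v(n)\mid\omega_{\bar v}]=o(n^a)$, so $\mathbb{E}_{\boldsymbol{\theta}'}[T_j^v(n)\mid\omega_{\bar v}]=n-o(n^a)$ and hence $m_j^v(n)\geq T_j^v(n)$ is near-certainly of order $n$. I would then pit the event $\{m_j^v(n) < (1-\epsilon)\ln n / kl(\theta_j || \theta_j')\}$ against this largeness: were its $P_{\boldsymbol{\theta}}(\cdot\mid\omega_{\bar v})$-probability non-negligible, the change of measure together with a maximal inequality controlling the log-likelihood ratio along the random index $m_j^v(n)$ would force a non-vanishing $P_{\boldsymbol{\theta}'}(\cdot\mid\omega_{\bar v})$-probability, contradicting consistency under $\boldsymbol{\theta}'$. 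Letting $\epsilon\to 0$ and $\theta_j'\to\theta^*$ yields the conditional bound with the $\tfrac{1-\delta}{1+\delta}$ factor; the second inequality in (i) then follows by Fatou's lemma, since the conditional $\liminf$ bound holds for every fixed $\omega_{\bar v}$ and transfers to the unconditional expectation after integrating.

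For (ii), I would write $m_j^v(n)=\langle T_j(n), A(v,:)\rangle$ with $T_j(n)=(T_j^1(n),\dots,T_j^m(n))$, so the per-node bounds from (i) assert that the realized count vector asymptotically satisfies the covering constraints $\langle T_j(n), A(v,:)\rangle \geq q_j$ for every $v$. Non-altruism supplies the missing temporal structure: once a node's neighbourhood has accumulated on the order of $\ln n$ arm-$j$ samples that node stops pulling arm $j$, which is exactly the freezing constraint $z_i(\eta_k)=z_k(\eta_k)$ for $i\geq k$, with $\eta_k$ the $k$-th node to reach its quota. Hence $m_j^G(n)=\|T_j(n)\|_1=\|z_m\|_1$ is bounded below by the optimal value of the minimization in \eqref{OptimizationProblem2}, which by definition equals $L_G\,q_j$; dividing by $\ln n$ and taking $\liminf$ delivers \eqref{LB:Inconsistent}.

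I expect the conditional change-of-measure in (i) to be the main obstacle. The delicate points are (a) justifying that conditioning on $\omega_{\bar v}$ leaves the arm-$j$ reward samples inside $\mathcal{N}(v)$ with their i.i.d.\ structure intact, so that the log-likelihood ratio is genuinely a sum of $m_j^v(n)$ independent terms amenable to a strong-law/maximal-inequality argument along a random stopping index; and (b) verifying that individual consistency, a hypothesis phrased for the sub-optimal arm $j$ under $\boldsymbol{\theta}$, transfers correctly to the perturbed measure $\boldsymbol{\theta}'$ so as to control the pulls of the remaining arms there. By contrast, the aggregation in (ii) should be comparatively routine once the freezing order is read off from non-altruism, being the minimization counterpart of the combinatorial problem \eqref{OptimizationProblem11} already treated for Theorem~\ref{Thm:3.1}.
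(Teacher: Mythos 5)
Your proposal is correct and takes essentially the same route as the paper's Appendix~C: part (i) is proved there by exactly your conditional change of measure (the paper perturbs only arm $j$'s parameter to $\boldsymbol{\gamma}=(\theta_1,\lambda,\theta_3,\dots,\theta_K)$ with $\mu(\lambda)>\mu(\theta^*)$, notes that given $\omega_{\bar v}$ the likelihood ratio is a product over the $m_j^v(n)$ arm-$j$ samples in $\mathcal{N}(v)$, applies Markov's inequality plus individual consistency under $\boldsymbol{\gamma}$, and invokes the maximal version of the SLLN), and part (ii) is obtained by feeding the per-node bounds of (i) as covering constraints into the minimization \eqref{OptimizationProblem2}, with non-altruism justifying the freezing constraints exactly as you describe. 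The only differences are cosmetic: your contrapositive phrasing and use of Fatou's lemma, versus the paper's direct $o(1)$ estimates transferred from the conditional to the unconditional law.
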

\vspace{3mm}
\begin{proof}
Refer Appendix C.
\end{proof}
The notation used in~\eqref{OptimizationProblem2} is the same as the notation in Theorem~\ref{Thm:3.1}, except that $l_i$ is replaced with $q_j$. Further, $L_G$ is obtained by dividing the solution to~\eqref{OptimizationProblem2} by $q_j$. Similar to \eqref{OptimizationProblem11}, solving (\ref{OptimizationProblem2}) analytically for an arbitrary network is difficult. Hence, we focus on solving (\ref{OptimizationProblem2}) for the networks shown in Fig. \ref{Fig:1}, and provide the corresponding lower bounds below. Let $\Delta_i = \mu(\theta^*) - \mu(\theta_i)$.
\\
\textbf{Corollary 5}
For an $m$-node FC network:
\begin{equation}
\liminf\limits_{n \rightarrow \infty} \frac{\mathbb{E}_{\boldsymbol{\theta}}[R^G(n)]}{\ln n} \geq \sum\limits_{i : \Delta_i > 0} \frac{1-\delta}{1+\delta} \cdot \frac{\Delta_i}{kl(\theta_i || \theta^*)}.
\end{equation} 
\textbf{Corollary 6} 
For an $m$-node circular network:
\begin{equation}
\liminf\limits_{n \rightarrow \infty} \frac{\mathbb{E}_{\boldsymbol{\theta}}[R^G(n)]}{\ln n} \geq \frac{m}{3}\sum\limits_{ i : \Delta_i > 0} \frac{1-\delta}{1+\delta} \cdot \frac{\Delta_i}{kl(\theta_i || \theta^*)}.
\end{equation} 
\textbf{Corollary 7} 
For an $m$-node star network:
\begin{equation}
\label{eq:LBStar}
\liminf\limits_{n \rightarrow \infty} \frac{\mathbb{E}_{\boldsymbol{\theta}}[R^G(n)]}{\ln n} \geq \sum\limits_{i : \Delta_i > 0} \frac{1-\delta}{1+\delta} \cdot \frac{\Delta_i}{kl(\theta_i || \theta^*)}.
\end{equation} 
\textbf{Corollary 8} 
For an $m$-node FD network:
\begin{equation}
\liminf\limits_{n \rightarrow \infty} \frac{\mathbb{E}_{\boldsymbol{\theta}}[R^G(n)]}{\ln n} \geq m  \sum\limits_{i : \Delta_i > 0} \frac{1-\delta}{1+\delta} \cdot \frac{\Delta_i}{kl(\theta_i || \theta^*)}.
\end{equation}  

From corollaries 1-8, we infer that the upper bound of the UCB-Network policy and the lower bound given by~\eqref{LB:Inconsistent} are of the same order, for FC $(\ln n)$, circular $(m \ln n)$ and FD $(m \ln n)$ networks. However, for star networks, there is a large gap between the UCB-Network upper bound and the lower bound for NAIC policies in \eqref{eq:LBStar}. Since the UCB-Network is an NAIC class policy, we proceed to ascertain if either of these bounds is too loose for star networks. Our special interest in star networks is due to the prevalence of hubs in many social networks, and as we shall see in the next section, this hierarchical structure can be exploited to enhance the learning rate.

Next, we consider a specific instance of a large star network, for which we derive a refined lower bound for the class of NAIC policies. This refined lower bound is of the same order as the regret upper bound for the UCB-Network policy, implying that the upper bound in Theorem~\ref{Thm:3.1} is tight in an order sense, and cannot be improved in general. 
\begin{theorem}
\label{Thm:4.3}
Let $G_n=(V_n, E_n)$ be a sequence of $m_n$-node star networks learning a $2$-arm stochastic MAB problem with mean rewards $\mu_a, \mu_b$ such that $\mu_a > \mu_b$. Suppose $m_n \geq 2 \cdot \frac{\ln n}{ kl(\mu_b || \mu_a) }$, and that each node follows an NAIC policy. Then,
\begin{equation}
\liminf\limits_{n \rightarrow \infty} \frac{\mathbb{E} [m_2^{G_n}(n)]}{(m_n-1) \ln n} \geq  \frac{1}{kl(\mu_b || \mu_a)}. 
\end{equation}
\end{theorem}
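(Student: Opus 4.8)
The plan is to reduce the network count $m_2^{G_n}(n)$ to the \emph{self-plays} of the $m_n-1$ leaves and to argue that the hub's own pulls of the suboptimal arm are asymptotically negligible. Writing $c$ for the centre and $v_1,\dots,v_{m_n-1}$ for the leaves, and recalling that $\mathcal{N}(v)=\{v,c\}$ for every leaf while $\mathcal{N}(c)=V_n$, I would start from the exact identity
\begin{equation*}
m_2^{G_n}(n) = T_2^c(n) + \sum_{k=1}^{m_n-1} T_2^{v_k}(n) \geq \sum_{k=1}^{m_n-1} T_2^{v_k}(n),
\end{equation*}
so that it suffices to show each leaf \emph{itself} pulls the suboptimal arm at least $\frac{\ln n}{kl(\mu_b || \mu_a)}(1-o(1))$ times in expectation. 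First I would invoke Theorem~\ref{Thm:4.1}(i): for every leaf $v$ and every outside history $\omega_{\bar v}$,
\begin{equation*}
\liminf_{n\to\infty}\frac{\mathbb{E}_{\boldsymbol{\theta}}[m_2^{v}(n)\mid\omega_{\bar v}]}{\ln n} \geq \frac{1-\delta}{1+\delta}\cdot\frac{1}{kl(\mu_b || \mu_a)},
\end{equation*}
and since $m_2^{v}(n)=T_2^{v}(n)+T_2^{c}(n)$, this transfers to a lower bound on $T_2^{v}(n)$ as soon as $T_2^{c}(n)$ is controlled.

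The crux is therefore to bound the centre's own count $T_2^c(n)$, and this is exactly where the hypothesis $m_n\ge 2\,\frac{\ln n}{kl(\mu_b || \mu_a)}$ and the \emph{non-altruistic} property enter. Because the centre observes every node, $m_2^c(t)=m_2^{G_n}(t)$, i.e. the centre's sample count of arm $2$ equals the \emph{entire} network count. I would let $\tau$ be the first round at which $m_2^{G_n}(\tau)\ge a_1\ln n$ and use non-altruism to write $\mathbb{E}[T_2^c(n)]\le \mathbb{E}[T_2^c(\tau)]+a_2$. The point of the large-network hypothesis is that, with $m_n-1\ge 2\,\frac{\ln n}{kl(\mu_b || \mu_a)}$ leaves each obliged to explore, the network accrues $a_1\ln n$ samples of arm $2$ within only $O(1)$ rounds; since the centre contributes at most one pull per round, $T_2^c(\tau)$ is negligible compared with $\frac{\ln n}{kl(\mu_b || \mu_a)}$. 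Making this rigorous is cleanest under a conditioning argument: fix a leaf $v$ and condition on an $\omega_{\bar v}$ in which the remaining $m_n-2$ leaves flood the hub early (an event of positive probability because the rewards have full support); under such $\omega_{\bar v}$ the hub trips its threshold in $O(1)$ rounds, so $\mathbb{E}[T_2^c(n)\mid\omega_{\bar v}]=O(1)$, whence the conditional form of Theorem~\ref{Thm:4.1}(i) gives $\mathbb{E}[T_2^{v}(n)\mid\omega_{\bar v}]\ge\frac{1-\delta}{1+\delta}\frac{\ln n}{kl(\mu_b || \mu_a)}-O(1)$.

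Summing the per-leaf bounds over the $m_n-1$ leaves and dividing by $(m_n-1)\ln n$ would then give $\liminf_n \frac{\mathbb{E}[m_2^{G_n}(n)]}{(m_n-1)\ln n}\ge \frac{1-\delta}{1+\delta}\frac{1}{kl(\mu_b || \mu_a)}$, and letting $\delta\downarrow 0$ yields the claim. I expect the main obstacle to be the last conceptual step: transferring the per-leaf \emph{conditional} lower bounds on $T_2^{v}(n)$ into a bound on the \emph{unconditional} network count $\mathbb{E}[m_2^{G_n}(n)]$, while simultaneously ruling out that the hub's shared samples be reused to satisfy the consistency requirement of every leaf at once. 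This overcounting --- a single hub pull of arm $2$ is seen by all $m_n-1$ leaves --- is precisely what makes the naive decomposition $\sum_v m_2^v(n)$ too weak (it is what drives $L_G=1$ for the star in Corollary~7), and the entire force of the refinement lies in showing that non-altruism together with the scale $m_n\gtrsim \ln n$ prevents the hub from acting as a common information source for the whole periphery.
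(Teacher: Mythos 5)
Your skeleton matches the paper's proof: lower-bound $m_2^{G_n}(n)$ by the leaves' self-plays, invoke Theorem~\ref{Thm:4.1}(i) per leaf, and use non-altruism of the hub (which sees the whole network, so $m_2^c(t)=m_2^{G_n}(t)$) to force each leaf to self-play roughly $\ln n/kl(\mu_b \Vert \mu_a)$ times. The gap is in the one step that actually carries the theorem: showing the hub acquires $\Omega(\ln n)$ samples of arm $2$ within $O(1)$ rounds. Your mechanism --- condition on a ``positive-probability'' $\omega_{\bar v}$ in which the other $m_n-2$ leaves flood the hub early, justified ``because the rewards have full support'' --- does not work. Full support randomizes \emph{rewards}, not \emph{actions}: each action is a (possibly deterministic) function of the policy and its observed history, and NAIC policies may well be deterministic. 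Under UCB-user every node plays arm $1$ in round $1$; worse, a policy that plays arm $1$ for the first $\lceil \ln n\rceil$ rounds and only then starts exploring can still be NAIC, and under it your flooding event has probability zero for every leaf. So the conditioning event you need may simply not exist, and non-altruism (an unconditional guarantee, ``irrespective of the policies of other users'') does not obviously yield $\mathbb{E}[T_2^c(n)\mid\omega_{\bar v}]=O(1)$ under such conditioning anyway.

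Second, even when a flooding $\omega_{\bar v}$ has positive probability, a lower bound on $\mathbb{E}[T_2^v(n)\mid\omega_{\bar v}]$ for that particular, possibly very rare, realization says nothing about $\mathbb{E}[T_2^v(n)]$, which averages over all $\omega_{\bar v}$ with their actual probabilities; you would need the bound for \emph{typical} outside histories. You flag this transfer as ``the main obstacle'' and leave it open --- but it \emph{is} the theorem. The paper closes it with no conditioning at all: in round $1$ no node has any information, so under the symmetric randomization of the two labelings $(\mu_a,\mu_b)$ versus $(\mu_b,\mu_a)$, each node's first pull lands on the $\mu_b$-arm with probability $\tfrac{1}{2}$, giving $\mathbb{E}[m_2^{G_n}(1)]=m_n/2\geq \ln n/kl(\mu_b \Vert \mu_a)$ by the hypothesis on $m_n$ (this is exactly, and only, where that hypothesis enters). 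Hence the hub is flooded after the very first round, unconditionally; non-altruism then caps its subsequent plays of arm $2$ at $O(1)$, and Theorem~\ref{Thm:4.1} forces each of the $m_n-1$ leaves to self-play arm $2$ at least $\ln n/kl(\mu_b \Vert \mu_a)-O(1)$ times, which sums to the claim. That first-round symmetry argument is the ingredient your proposal is missing.
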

\vspace{3mm}  
\begin{proof}
Refer Appendix D.
\end{proof}

We now briefly explain the intuition behind Theorem~\ref{Thm:4.3}. In a large star network, the center node learns the sub-optimal arm very quickly (in a few rounds), since it has access to a large number of samples in each round. Under an NAIC policy, once a node has enough samples to learn that an arm is sub-optimal, by definition, it stops choosing that arm with high probability. Hence, the center node stops choosing the sub-optimal arm with high probability, which in turn ensures that the leaf nodes learn the sub-optimal arm themselves, by choosing the sub-optimal arm $O(\ln n)$ times. This leads to a regret of $O((m-1) \ln n)$. Our simulation results, in Table~\ref{Tab:1}, also illustrates this behaviour, for the UCB-Network policy (which is NAIC) on large star networks.       

Theorem~\ref{Thm:4.3} asserts that, for a fixed, large time horizon $n$, we can construct a large star network with $m$ nodes, whose expected regret is atleast  $O((m-1) \ln n)$. This lower bound matches with the upper bound for UCB-Network in Theorem~\ref{Thm:3.1}. Thus, we conclude that the class of NAIC policies could suffer a large regret, matching the upper bound in an order sense. However, for the same star network and time horizon, the universal lower bound in~\eqref{eq:UniversalLowerbound} turns out to be $O(\ln n)$.  This gap suggests the possibility that there might exist good learning policies (which are not NAIC) for a star network, with regret matching the universal lower bound. In the next section, we propose one such policy, which does not belong to the NAIC class.  
\section{The Follow Your Leader (FYL) Policy}
\label{FYL}
In this section, we first outline a policy called Follow Your Leader (FYL) for a generic $m$-node network. The policy is based on exploiting high-degree hubs in the graph; for this purpose, we define the dominating set and the dominating set partition.

\textbf{Definition 3} [Dominating set of a graph]~\cite{haynes1998fundamentals} A \textit{dominating set} $D$ of a graph $G = (V,E)$ is a subset of $V$ such that every node in $V\setminus D$ is adjacent to atleast one of the nodes in $D$. The cardinality of the smallest dominating set of $G$ is called as the \emph{domination number}. 

\textbf{Definition 4} [Dominating set partition of a graph] 
Let $D$ be a dominating set of $G$. A dominating set partition based on $D$ is obtained by partitioning $V$ into $|D|$ components such that each component contains a node in $D$ and a subset of its one hop neighbors. 

Note that given a dominating set for a graph, it is easy to obtain a corresponding dominating set partition. The FYL policy for an $m$-node generic network is outlined in Algorithm~\ref{alg:FYL}. Under the FYL policy, all nodes in the dominating set are called {\em leaders} and all other nodes as {\em followers}; the follower nodes follow their leaders while choosing an action in a round.     
As we argued in Section~\ref{Lowerbound1}, the policy deployed by a follower node in FYL is not \network-oblivious. The following theorem presents an upper bound on the expected regret of an $m$-node star network which employs the FYL policy.
\begin{algorithm}[htb]
   \caption{Follow Your Leader (FYL) Policy}
   \label{alg:FYL}
   \begin{algorithmic}
   \STATE {\bfseries Input:} Graph $G$, a dominating set $D$ and a dominating set partition 
   \STATE{\bfseries Leader - Each node in $D:$ }
   \STATE{Follows the UCB-user policy by using the samples of itself and its one-hop neighbours in the same component}
   \STATE{\bfseries Follower - Each node in $V\setminus D:$ }
   \STATE{In round $t=1:$} 
   \STATE{- Chooses an action randomly from $\mathcal{K}$}
   \STATE{In round $t > 1$}
   \STATE{-  Chooses the action taken by the leader in its component, in the previous round $(t-1)$}
\end{algorithmic}
\end{algorithm}

\begin{theorem}[FYL regret bound, star networks]
\label{Thm:5.1}
Suppose the star network $G$ with a dominating set as the center node, follows the FYL policy to learn a stochastic MAB problem with $K$ arms. Assume that the rewards lie in $[0,1]$. Then, 
\begin{align*}
\mathbb{E}[R^G(n)] \leq \sum\limits_{i:\mu_i < \mu^*}^K \frac{8 \ln n}{\Delta_i} + d,
\end{align*}
where $d = \Big[ 2m - 1 + \frac{2 m}{4 \beta -1} \left( 1 + \frac{1}{(4 \beta -1) \ln (1/ \beta)} \right) \Big] \sum\limits_{j=1}^K \Delta_j$, $\Delta_i= \mu^* - \mu_i$ and $\beta \in (0.25,1)$.
\end{theorem}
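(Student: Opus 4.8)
The plan is to exploit the defining feature of a star network under FYL: since the centre is adjacent to every other node, its one-hop neighbourhood is the entire vertex set, so the sample count the leader uses, $m_i^c(t)$, coincides exactly with the network-wide count $m_i^G(t)$. It therefore suffices to bound $\mathbb{E}[m_i^G(n)]$ for each sub-optimal arm $i$ and then weight by $\Delta_i$ as in \eqref{eq:2.3}. The second structural fact I would record is the follower dynamics: for a leaf $u$ and $t>1$ we have $a^u(t)=a^c(t-1)$, so arm $i$ is played by all $m-1$ leaves in round $t$ precisely when the leader played $i$ in round $t-1$. Consequently the number of arm-$i$ samples the network acquires in any single round is at most $m$ (the leader plus the $m-1$ leaves); I will use this as a uniform per-round increment bound on $m_i^G$, and one checks it holds through the UCB-user initialisation and the followers' random first round as well.

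First I would isolate the round $t_0$ at which the network-wide count first reaches the threshold $l_i = 8\ln n/\Delta_i^2$, i.e. $m_i^G(t_0-1) < l_i \leq m_i^G(t_0)$. The per-round increment bound immediately yields $m_i^G(t_0) < l_i + m$, which captures the ``$m$-fold speedup'': because every node feeds the leader, the leader accumulates $l_i$ samples after only about $l_i/m$ of its own plays. At every round after $t_0$ the leader's count satisfies $m_i^c \geq l_i$, so I can invoke Lemma~\ref{Thm:A.1} applied to the centre node. Since that lemma (the non-altruistic property) bounds a node's post-threshold plays \emph{irrespective of the policies of the other users}, it applies verbatim to the leader even though the leaves are not running UCB: the expected number of times the leader plays arm $i$ after holding $l_i$ samples of it is at most $C_\beta := \frac{2}{4\beta-1} + \frac{2}{(4\beta-1)^2\ln(1/\beta)}$.

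The key bookkeeping step converts these residual leader plays into a bound on all residual network plays. Writing $P$ for the number of times the leader plays $i$ in rounds $t_0+1,\dots,n$ (so that $\mathbb{E}[P] \le C_\beta$ by the previous paragraph), the follower rule implies the leaves play $i$ after $t_0$ only in response to a leader play of $i$ in rounds $t_0,\dots,n-1$; there are at most $1+P$ such triggering rounds, each generating $m-1$ leaf plays. Hence the total number of arm-$i$ plays after $t_0$ is at most $P + (m-1)(1+P) = mP + (m-1)$, and combining with $m_i^G(t_0) < l_i + m$ gives $m_i^G(n) \le l_i + 2m - 1 + mP$. Taking expectations yields $\mathbb{E}[m_i^G(n)] \le l_i + 2m - 1 + mC_\beta$, so that $\Delta_i \mathbb{E}[m_i^G(n)] \le \frac{8\ln n}{\Delta_i} + \Delta_i\,(2m-1+mC_\beta)$; summing over the sub-optimal arms and using $\sum_{i:\mu_i<\mu^*}\Delta_i = \sum_{j=1}^K \Delta_j$ reproduces the claimed bound with the stated constant $d$.

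I expect the main obstacle to be the last paragraph's accounting of the one-round delay between a leader play and the resulting leaf plays, which is exactly where the factor $m$ and the additive $(m-1)$ in $d$ originate. Getting the \emph{exact} constant rather than a looser $O(m)$ requires tracking that the leader's post-$t_0$ plays and the leaves' mirror plays are disjoint events, and that the single round-$t_0$ leader play is the only ``extra'' trigger not already counted in $P$. The remaining steps are routine once one accepts the identity $m_i^c = m_i^G$ and the per-round increment bound.
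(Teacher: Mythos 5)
Your proof is correct and takes essentially the same route as the paper: both arguments rest on the follower dynamics $a^u(t)=a^1(t-1)$, the $m$-fold speedup of the centre's sample count (so the threshold $l_i$ is reached after only about $l_i/m$ leader plays), and the non-altruistic bound of Lemma~\ref{Thm:A.1} applied to the centre node (valid irrespective of the leaves' policy), producing the identical constant $2m-1+mC_\beta$ per arm. The only difference is organisational: the paper first proves the node-wise inequality $\sum_{v=1}^m T_i^v(n) \leq (m-1) + m\,T_i^1(n)$ and then splits the leader's own count at its threshold time $\tau_1$, whereas you split time at $t_0$ and count leaf plays as delayed responses to leader plays --- the same bookkeeping in a different order.
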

\vspace{3mm}
\begin{proof}
Refer Appendix E.
\end{proof} 
A key insight obtained from Theorem~\ref{Thm:5.1} is that an $m$-node star network with the FYL policy incurs an expected regret that is lower by a factor $(m-1)$, as compared to any NAIC policy. More importantly, we  observe that the regret upper bound under the FYL policy meets the universal lower bound in~\eqref{eq:UniversalLowerbound}. Hence, we conclude that the FYL policy is order optimal for star networks. 

Finally, we present a result  that asserts an upper bound on the expected regret of a generic network under the FYL policy. 
\begin{theorem}[FYL regret bound, general networks]
\label{Thm:5.2}
Let $D$ be a dominating set of an $m-$node network $G= (V,E).$ Suppose $G$ with the dominating set $D$ employs the FYL policy to learn a stochastic MAB problem with $K$ arms, and the rewards lie in $[0,1]$, then
\begin{equation*}
\mathbb{E}[R^G(n)] \leq \sum\limits_{i:\mu_i < \mu^*}^K \frac{8|D| \ln n}{\Delta_i} + d.
\end{equation*}
\end{theorem}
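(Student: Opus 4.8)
The plan is to reduce the general-network bound to the star-network bound of Theorem~\ref{Thm:5.1} through the dominating set partition. Fix a dominating set $D$ and an associated dominating set partition $\{C_d : d \in D\}$ of $V$ (Definition~4), so that $V = \bigsqcup_{d \in D} C_d$, each $C_d$ contains its leader $d \in D$ together with a subset of the one-hop neighbours of $d$, and every follower in $C_d$ is adjacent to $d$. Since the regret of the network is additive over nodes (see~\eqref{eq:2.3}), I would first write
\[
\mathbb{E}[R^G(n)] = \sum_{d \in D} \mathbb{E}[R^{C_d}(n)], \qquad R^{C_d}(n) := \sum_{v \in C_d} R^v(n),
\]
so that it suffices to bound the regret accumulated within each component.

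The key structural observation is that, under the FYL policy, each component $C_d$ evolves exactly as an isolated $c_d := |C_d|$-node star network whose centre is the leader $d$. Indeed, by definition of the policy the leader $d$ runs UCB-user using only the action--reward samples of the nodes in $\mathcal{N}(d) \cap C_d = C_d$, while each follower in $C_d$ merely copies the previous action of $d$; no node in $C_d$ ever consults a sample generated outside $C_d$, and the FYL dynamics use only the edges joining $d$ to its followers (any edges among followers are never consulted). Because the reward streams $\{X_i^v(t)\}$ are independent across nodes, the sample path restricted to $C_d$ is therefore statistically identical to that of a stand-alone $c_d$-node star running FYL. I would isolate this decoupling as a short lemma before invoking Theorem~\ref{Thm:5.1}.

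Granting the decoupling, I would apply the star-network bound of Theorem~\ref{Thm:5.1} to each component, obtaining
\[
\mathbb{E}[R^{C_d}(n)] \leq \sum_{i:\mu_i < \mu^*} \frac{8 \ln n}{\Delta_i} + d_{c_d},
\]
where $d_{c_d}$ is the constant $d$ of Theorem~\ref{Thm:5.1} with $m$ replaced by $c_d$. Summing over the $|D|$ components, the leading terms contribute $|D| \sum_{i:\mu_i < \mu^*} \frac{8 \ln n}{\Delta_i}$, exactly the stated main term. For the additive constants, using that the partition is exact, $\sum_{d \in D} c_d = m$, together with $\sum_{d \in D}(2 c_d - 1) = 2m - |D| \leq 2m - 1$, I would conclude $\sum_{d \in D} d_{c_d} \leq d$, which closes the bound.

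The main obstacle is establishing the decoupling claim rigorously rather than the arithmetic of summing the bounds. One must verify that the UCB index computed by leader $d$ depends on the sample path only through the samples of $C_d$: this requires that every follower assigned to $C_d$ is genuinely a neighbour of $d$ (so its samples are visible to $d$) and that the partition assigns each follower to a single leader (so there is no double counting), both of which are guaranteed by Definition~4. A secondary point to handle carefully is the one-round copying delay: when the leader stops selecting a sub-optimal arm $i$, its followers may play $i$ once more in the next round, adding at most $c_d - 1$ plays per arm; this contribution is already absorbed into the constant $d_{c_d}$ inherited from Theorem~\ref{Thm:5.1} and does not affect the $\ln n$ term.
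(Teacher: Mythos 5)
Your proposal is correct and takes essentially the same approach as the paper: the paper's proof likewise uses the dominating set partition, notes that each leader's UCB-user indices depend only on samples from its own component so that each component evolves as an isolated star under FYL, applies Theorem~\ref{Thm:5.1} componentwise, and sums over the $|D|$ components. Your write-up simply makes explicit the decoupling argument and the bookkeeping of the additive constants (e.g., $\sum_{d \in D}(2c_d - 1) = 2m - |D| \leq 2m - 1$), which the paper's two-sentence proof leaves implicit.
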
 
\vspace{3mm}
\begin{proof}
Refer Appendix F.
\end{proof}

From the above theorem we infer that, the expected regret of a network scales linearly with the cardinality of a given dominating set. Hence, in order to obtain a tighter upper bound, we need to supply a smallest dominating set $D^*$  to the FYL policy. Suppose, if we provide $D^*$ as the input to the FYL policy, then we obtain an improvement of factor $m/|D^*|$ in the expected regret of an $m$-node network compared to the fully disconnected network.

It is known that, computing a smallest dominating set of a given graph is an NP-hard problem~\cite{Kuhn}. However, fast distributed approximation algorithms for the same are well-known in the literature. For example, Algorithm $35$ in \cite{Kuhn} finds a smallest dominating set with an approximation factor $\log(\text{MaxDegree}(G)).$  Also, upper bounds on the domination number for specific networks such as Erdos-Renyi, power-law preferential attachment and random geometric graphs are available in~\cite{wieland2001domination,molnar2014dominating,bonato2015domination}. 

\section{Numerical Results}
\label{Results}
\begin{figure*}[!htb]
\minipage{0.32\textwidth}
  \includegraphics[scale=0.21]{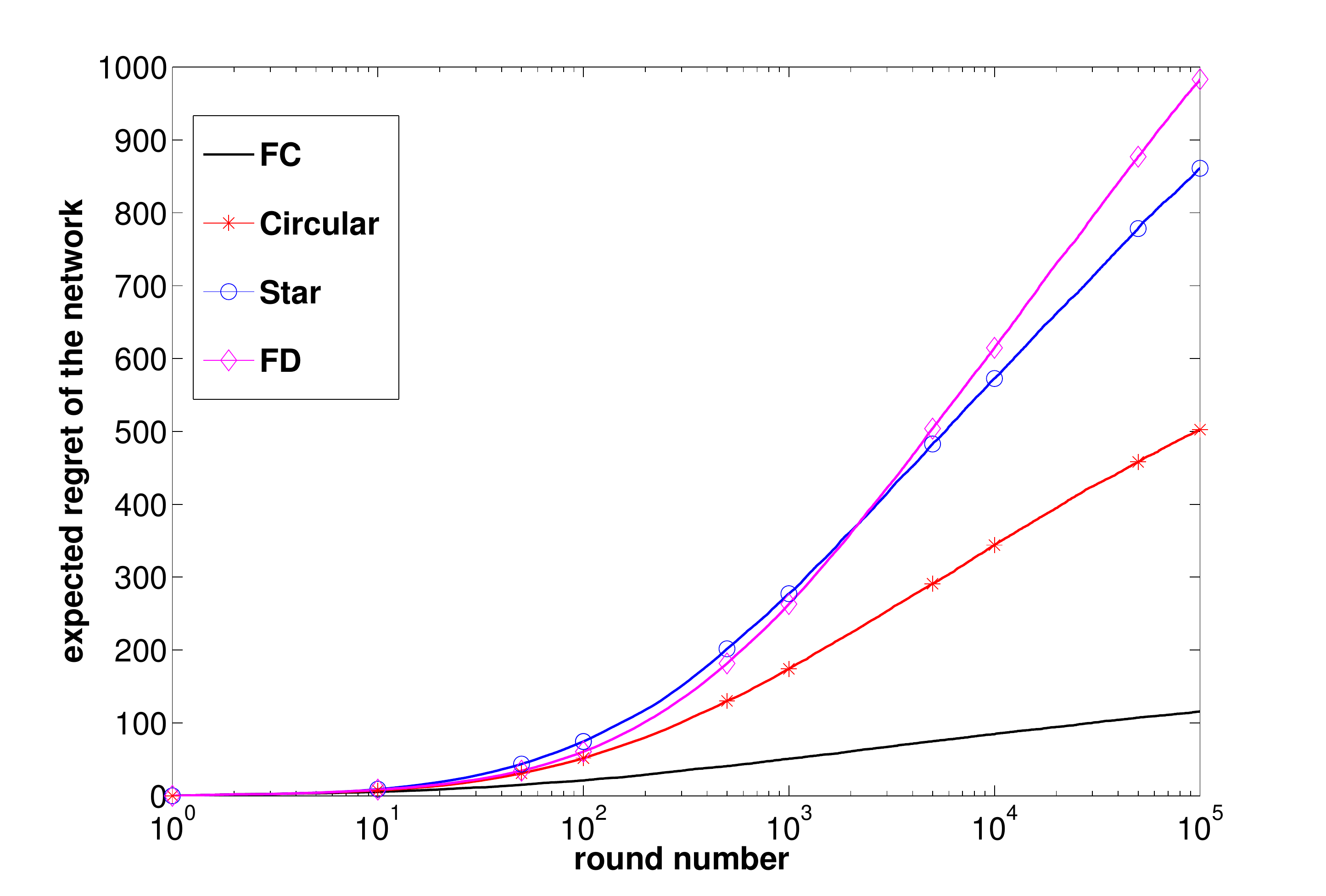} 
\endminipage\hfill
\minipage{0.32\textwidth}
  \includegraphics[scale=0.21]{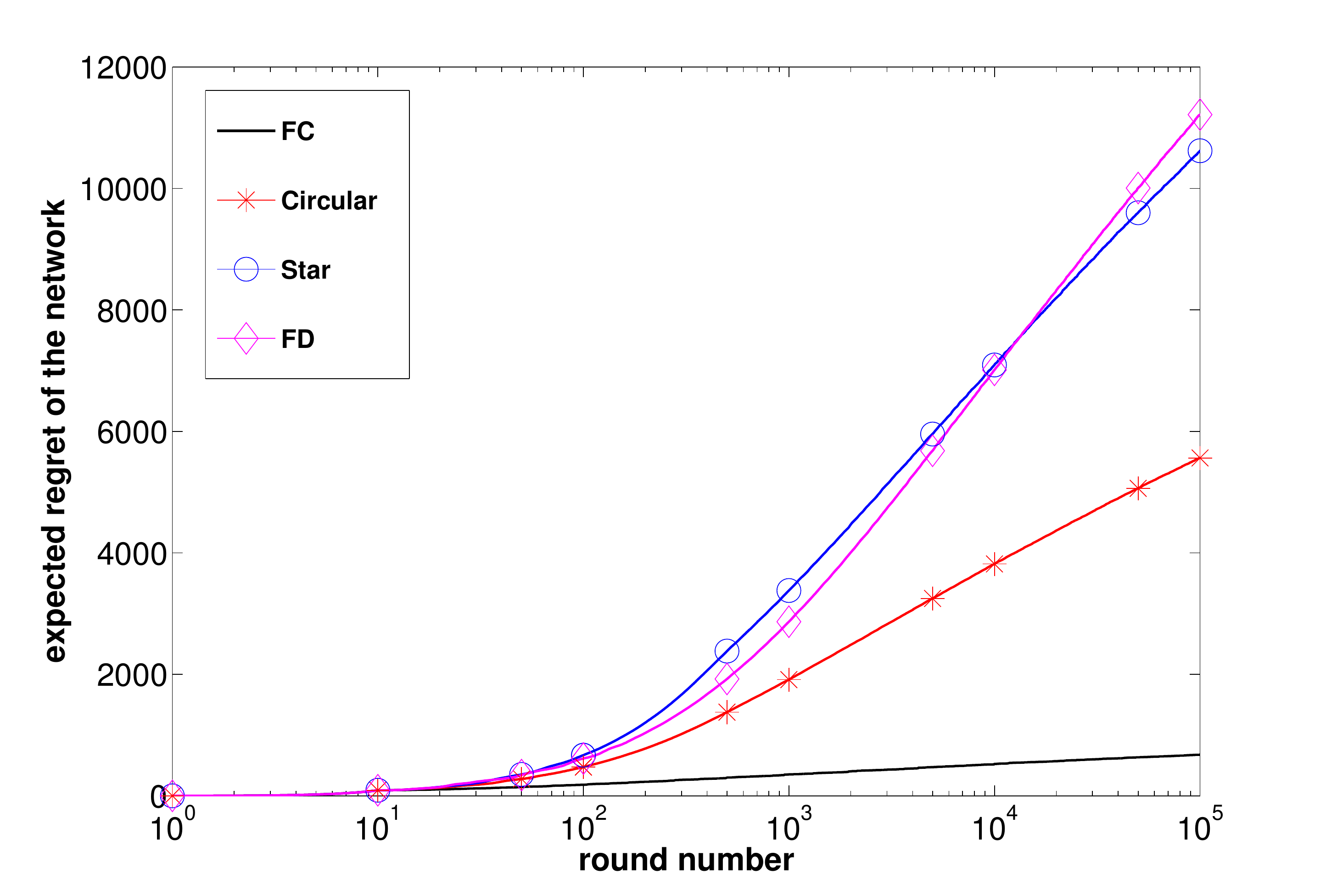} 
\endminipage\hfill
\minipage{0.32\textwidth}
 \includegraphics[scale=0.205]{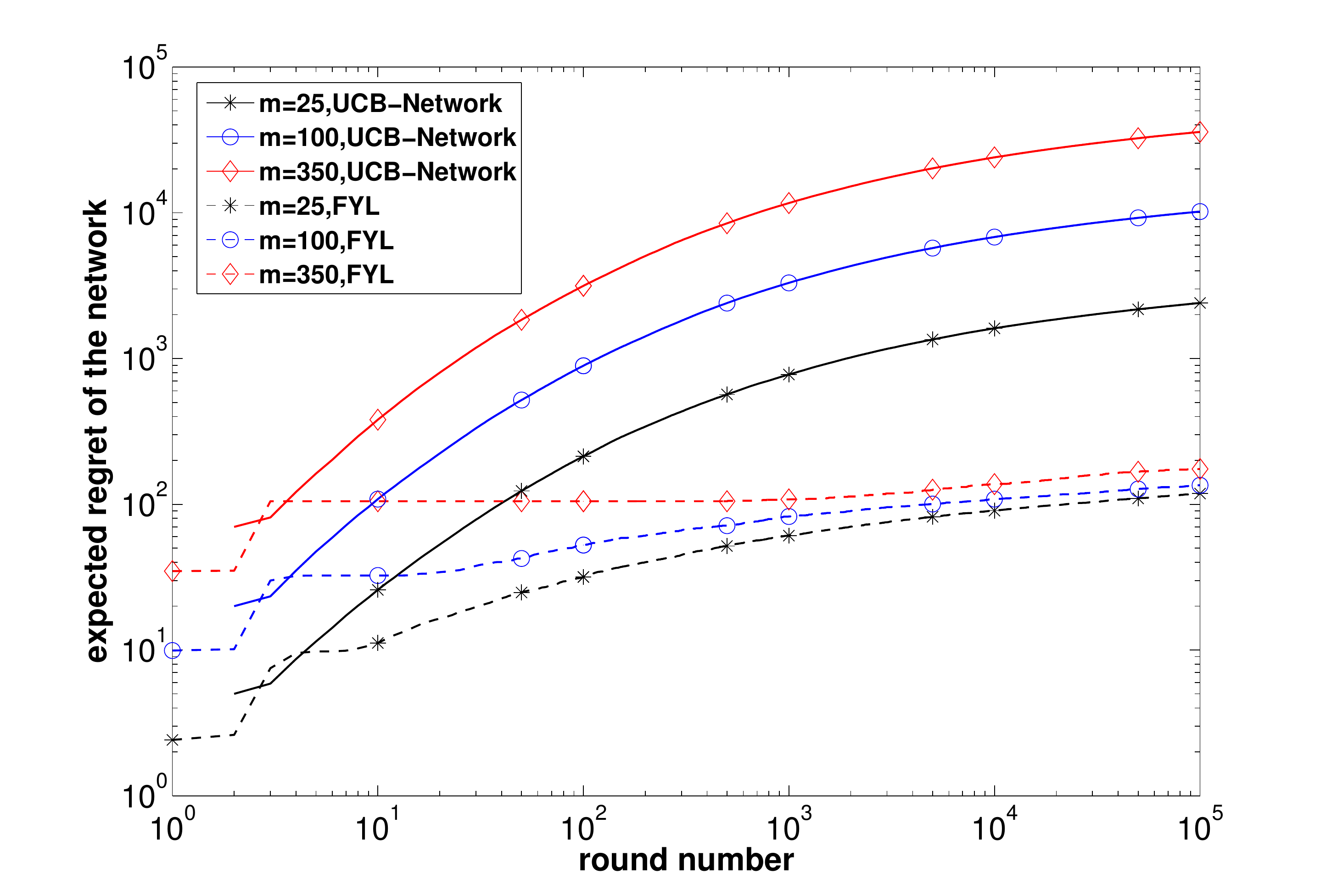} 
\endminipage
\end{figure*}
\begin{figure*}[!htb]
\minipage{0.32\textwidth}
  \includegraphics[scale=0.21]{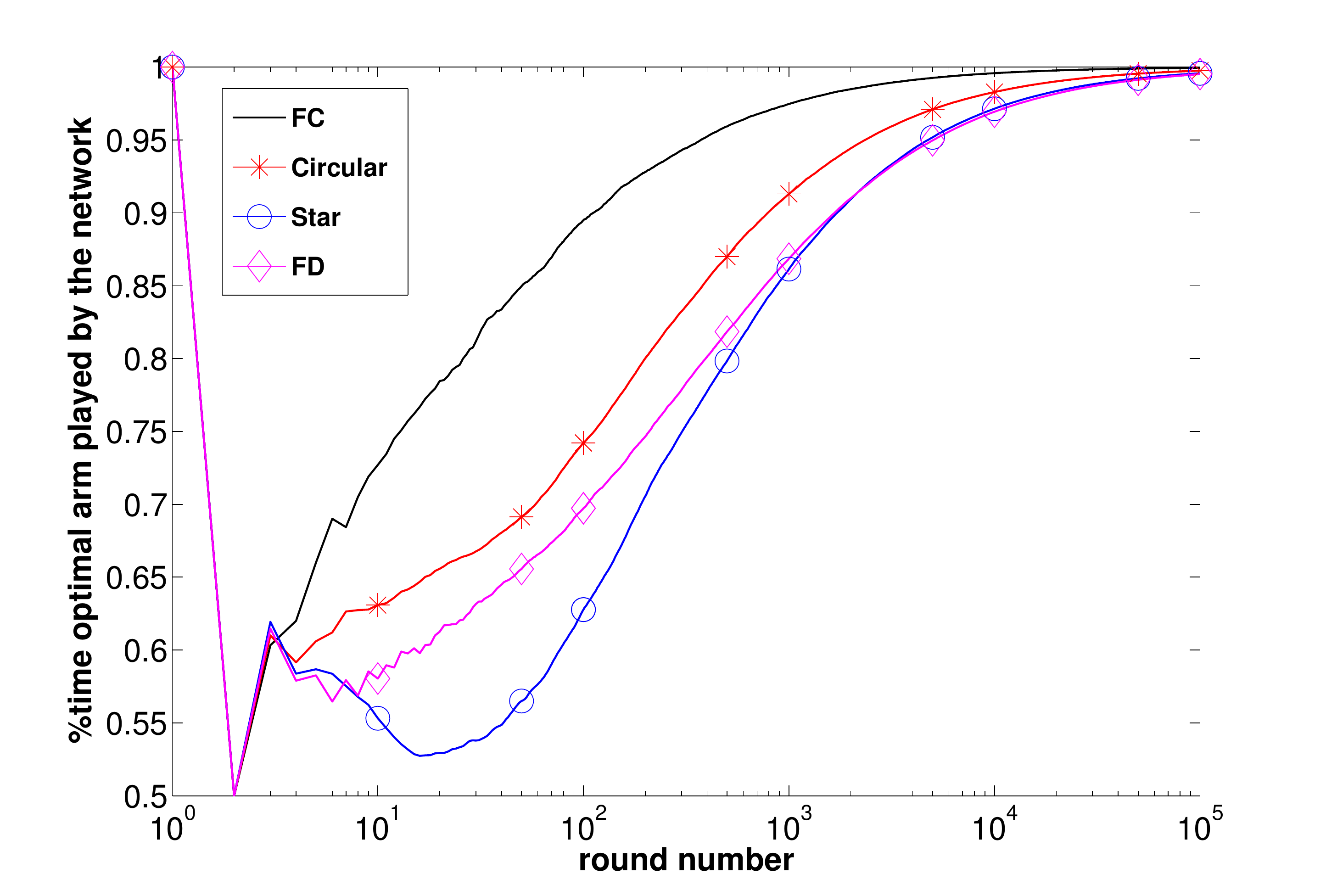}   
  \caption{Performance comparison of UCB-Network policy on various 10 node networks: 2 arms, Bernoulli rewards with means 0.7 and 0.5}
  \label{Fig:2}
\endminipage\hfill
\minipage{0.32\textwidth}
 \includegraphics[scale=0.21]{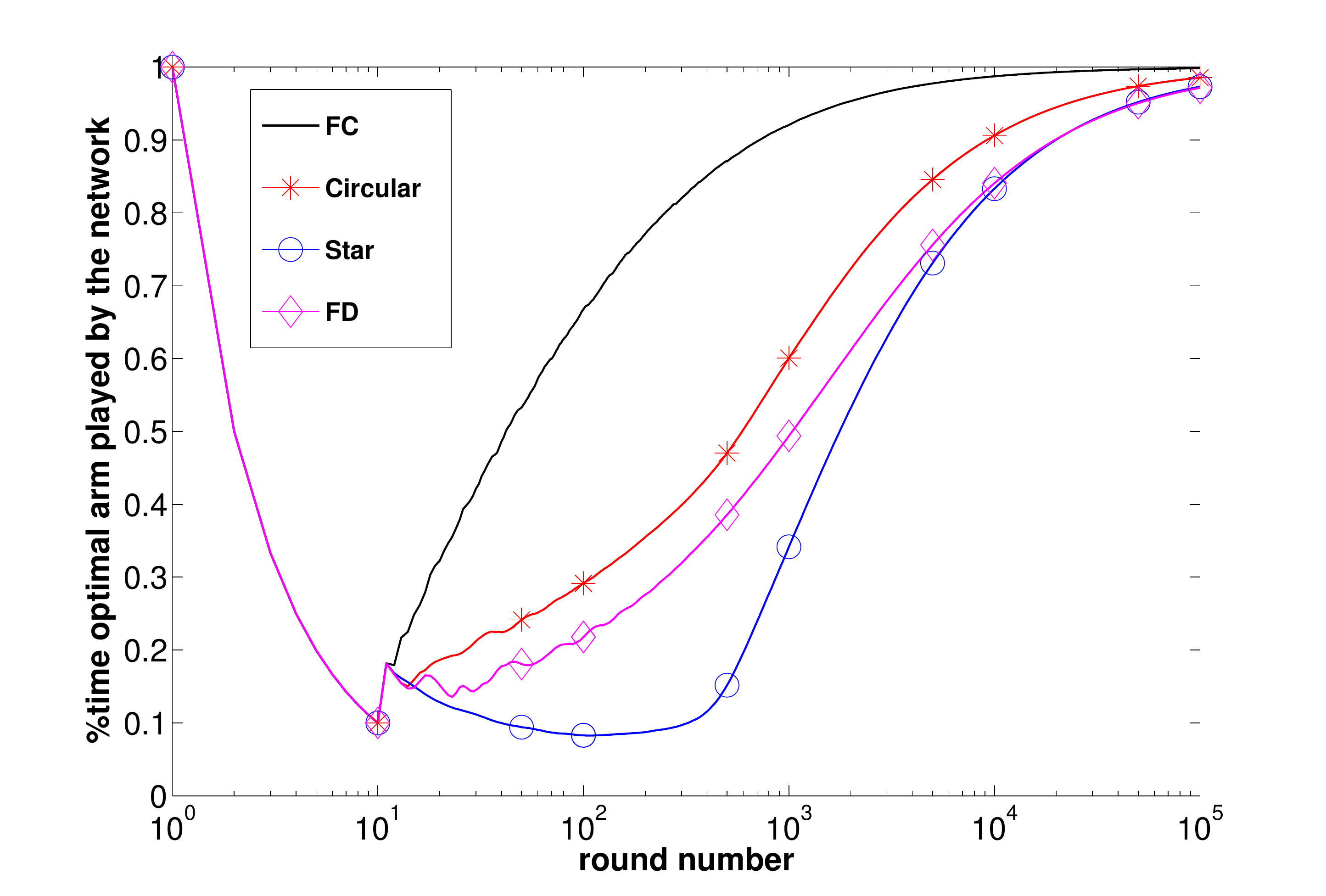}  
  \caption{Performance comparison of UCB-Network policy on various 20 node networks: 10 arms, Bernoulli rewards with means $1, 0.9, \dots, 0.1$}
  \label{Fig:3}
\endminipage\hfill
\minipage{0.32\textwidth}
 \includegraphics[scale=0.235]{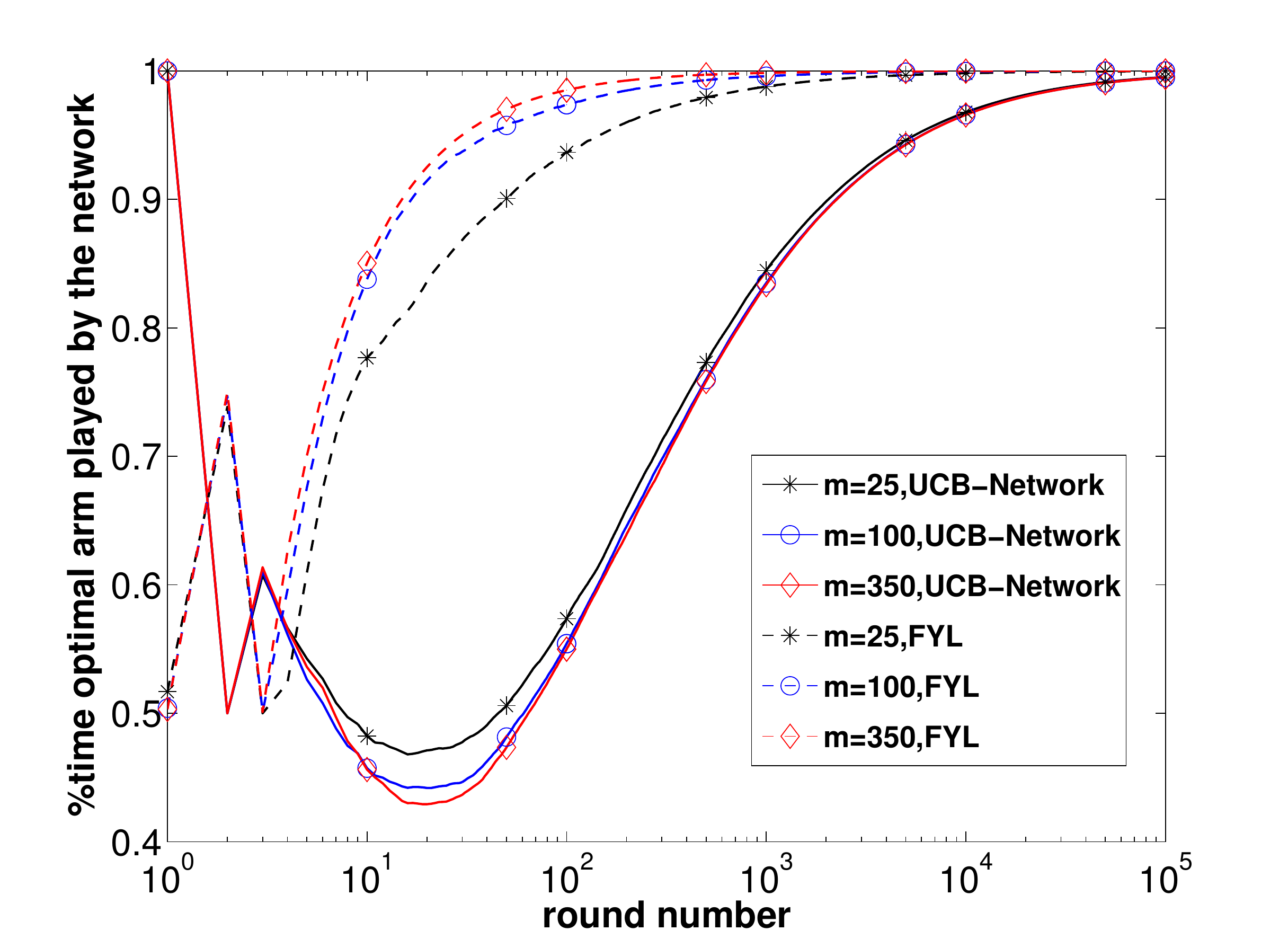} 
 \caption{Performance comparison of UCB-Network and FYL policies on various star networks: 2 arms, Bernoulli rewards with means 0.7 and 0.5}
  \label{Fig:4}
\endminipage
\end{figure*}
We now present some simulations that serve to corroborate our analysis. The simulations have been carried out using MATLAB, and are averaged over 100 sample paths.  We fix the time horizon $n$ to be $10^5$. 
\subsection{Performance of UCB-Network on various networks} 
We consider the following two scenarios: $(i)$ 10 node FC, circular, star and FD networks, 2 arms, Bernoulli rewards with means $0.7, 0.5$, and $(ii)$ 20 node FC, circular, star and FD networks, 10 arms, Bernoulli rewards with means $1, 0.9, 0.8, \dots, 0.1$. We run the UCB-Network policy for these scenarios, and calculate the expected regret of the network and percentage of time the optimal arm is played by the network. The results are shown in Fig.~\ref{Fig:2} and~\ref{Fig:3}. It can be observed from Fig.~\ref{Fig:2} and~\ref{Fig:3} that the expected regret of the network decreases and the percentage of time the optimal arm is chosen by the network increases, as connectivity of the network increases. This is because, an increase in the connectivity of the network increases the number of observations available to a user, in a given round.
\subsection{Performance of UCB-Network on star networks}
We consider 5, 10, 25, 50, 100, 200 and 350 node star networks, each learning a 2-armed stochastic bandit problem with Bernoulli rewards of means 0.7 and 0.5. We run the UCB-Network policy on the aforementioned networks, and summarise the results in Table~\ref{Tab:1}. 
Observe that, the expected number of times the center node chooses arm 2 (sub-optimal arm) decreases as the network size increases. This forces each leaf node to choose arm 2 on its own in order to learn. Therefore, as the star network size increases, the expected regret of the network can be approximated as the product of the network size and the expected regret of an isolated node.
\begin{table}
\centering
\vspace{0.2cm}
\caption{Expected number of times arm 2 played by a node in star networks under UCB-Network policy, 2 armed MAB problem with Bernoulli mean rewards as 0.7 and 0.5}
\label{Tab:1}
\begin{tabular}{|c|c|l|} \hline
Size of the network&Center Node&Leaf Node\\ \hline
5 & 66 & 448 \\ \hline
10 & 79 & 442 \\ \hline
25 & 33 & 486 \\ \hline
50 & 10 & 502\\ \hline
100 & 1 & 514\\ \hline
200 & 1 & 516\\ \hline
350 &1  & 513\\ \hline
\end{tabular}
\end{table}
\subsection{Comparison of UCB-Network and FYL policies}
We consider 25, 100 and 350 node star networks learning a 2-arm stochastic bandit problem with Bernoulli rewards of means 0.7 and 0.5. We run both UCB-Network and FYL policies on the above-mentioned networks. It can be observed from Fig.~\ref{Fig:4} that the star networks incur much smaller expected regret under the FYL policy, as compared to UCB-Network, and learn the optimal arm much faster. 
\section{Concluding Remarks}
\label{Conclusions}
We studied the collaborative learning of a stochastic MAB problem by a group of users connected through a social network. We analysed the regret performance of widely-studied   single-agent learning policies, extended to a network setting. Specifically, we showed that the class of NAIC policies (such as UCB-Network) could suffer a large expected regret in the network setting. We then proposed and analysed the FYL policy, and demonstrated that exploiting the structure of the network leads to a substantially lower expected regret. In particular, the FYL policy's upper bound on the expected regret matches the universal lower bound, for star networks, proving that the FYL policy is order optimal. This also suggests that using the center node as an information hub is the right information structure to exploit.   

In terms of future research directions, we plan to study this model for other flavours of MAB problems such as linear stochastic~\cite{Abbasi} and contextual bandits~\cite{Li}. Even in the basic stochastic bandit model considered here, several fundamental questions remain unanswered. For a given network structure, what is the least regret achievable by {\em any} local information-constrained learning strategy? Is it possible in a general network to outperform `good single-agent' policies (i.e., those that work well individually, like UCB) run independently throughout the network? If so, what kind of information sharing/exchange might an optimal strategy perform? 
It is conceivable that there could be sophisticated distributed bandit strategies that could signal within the network using their action/reward sequences, which in turns begs for an approach relying on information-theoretic tools. 
 
\bibliography{example_paper}
\bibliographystyle{IEEEtran}
\section*{Appendix A}
\label{UCB-Network Generic Upper Bound}
We require the following Lemma~\ref{Lemma:coupling}, \ref{Thm:A.1}, \ref{Thm:A.2} and inequality to prove Theorem~\ref{Thm:3.1}.
\\
\emph{Hoeffding's Maximal Inequality} \cite{Bubeck2}: Let $X_1, X_2, \dots$ be centered i.i.d random variables lying in $[0,1]$. Then, for any $x >0$ and $t \geq 1$,
\begin{equation*}
\mathbb{P}\left( \exists s \in \lbrace 1, \dots, t \rbrace, \sum\limits_{i=1}^s X_i > x \right) \leq \exp \left( -\frac{2x^2}{t} \right).
\end{equation*}  
In order to introduce Lemma~\ref{Lemma:coupling}, we need the following.

Consider a new probability space with probability measure $\tilde{\mathbb{P}}$, for the rewards corresponding to all arms. First, for a fixed node $v \in V$, for each action $i \in \mathcal{K}$, we consider a sequence of i.i.d. random variables $\lbrace Y_i(k) \rbrace_{k=1}^\infty$ with arm $i$'s distribution. If a node $v$ or its neighbours choose an arm $i$, then they receive the rewards from the sequence $\lbrace Y_i(k) \rbrace_{k=1}^\infty$. Next, for each $u \in V \setminus \mathcal{N}(v)$, for each action $i \in \mathcal{K}$, we consider a sequence of i.i.d. random variables $\lbrace X^u_i(k) \rbrace_{k=1}^\infty$ with arm $i$'s distribution. If a node $u \in V \setminus \mathcal{N}(v)$ chooses an arm $i$, then it receives a reward from the sequence $\lbrace X^u_i(k) \rbrace_{k=1}^\infty$. Recall that, in the setting described in Section~\ref{Model}, if a user $v$ chooses arm $i$, then it receives a reward from the sequence $\lbrace X^u_i(k) \rbrace_{k=1}^\infty$. In this probability space, we considered the probability measure to be $\mathbb{P}$. 

We prove that the probability of a sample path of the network in both probability spaces are equal, in the following lemma. Hence, this allows us to equivalently work in the new probability space, as and when appropriate.
\begin{lemma}
\label{Lemma:coupling}
Consider an $m$-node undirected user graph. Let $A(t)$ and $Z(t)$ be the random variables which indicate the actions chosen by all nodes and the corresponding rewards, in round $t$. Let $E(k) = \left( A(k), Z(k), \dots, A(1), Z(1) \right)$. Then, $\forall t \geq 1$,
\begin{equation*}
\mathbb{P} [ E(t) = \left( \bar{a}_{1:t}, \bar{z}_{1:t} \right) ] =\tilde{\mathbb{P}} [ E(t) = \left( \bar{a}_{1:t}, \bar{z}_{1:t} \right) ], 
\end{equation*}
where $\bar{a}_{1:t} = \left( \bar{a}_1, \dots, \bar{a}_t \right), \bar{z}_{1:t} = \left( \bar{z}_1, \dots, \bar{z}_t \right)$ with $\bar{a}_k \in \mathcal{K}^m$ and $\bar{z}_k \in [0,1]^m$ for any $k \geq 1$.
\end{lemma}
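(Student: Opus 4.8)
The plan is to establish the equality of the two laws by induction on the round index $t$, decomposing the joint law of the history $E(t)$ into a one-step transition kernel composed with the law of $E(t-1)$, and then checking that this kernel is identical under $\mathbb{P}$ and $\tilde{\mathbb{P}}$. Concretely, for any admissible $(\bar{a}_{1:t}, \bar{z}_{1:t})$ I would write
\begin{equation*}
\mathbb{P}[E(t) = (\bar{a}_{1:t}, \bar{z}_{1:t})] = \mathbb{P}[(A(t), Z(t)) = (\bar{a}_t, \bar{z}_t) \mid E(t-1) = (\bar{a}_{1:t-1}, \bar{z}_{1:t-1})] \cdot \mathbb{P}[E(t-1) = (\bar{a}_{1:t-1}, \bar{z}_{1:t-1})],
\end{equation*}
and likewise under $\tilde{\mathbb{P}}$. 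The induction hypothesis handles the second factor, so the whole argument reduces to matching the one-step conditional laws.

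For the transition kernel I would split $(A(t), Z(t))$ into the action vector and the reward vector. Since each node's policy is a (deterministic) map of the action--reward history it has observed, and that observed history is itself a function of $E(t-1)$, the event $\{A(t) = \bar{a}_t\}$ is a deterministic, $\{0,1\}$-valued function of $E(t-1)$ that is the \emph{same} function under both measures (the policies are unchanged across the two spaces). Hence either both conditional probabilities vanish, or the actions $\bar{a}_t$ are forced, and it remains only to compare the conditional law of the reward vector $Z(t)$ given the actions and the past.

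The core of the argument is that this conditional reward law factorizes identically in both spaces into a product, over nodes, of a single fresh i.i.d.\ draw from the played arm's distribution. Under $\mathbb{P}$ this is immediate: node $u$ playing arm $\bar{a}^u_t$ receives $X^u_{\bar{a}^u_t}(t)$, which is independent of $E(t-1)$ and i.i.d.\ across $(u,t)$. Under $\tilde{\mathbb{P}}$ the same conclusion holds once we argue that, for each arm $i$, the number of rewards already consumed from the pooled sequence $\{Y_i(k)\}$ up to round $t-1$ equals $m^v_i(t-1)$ and is therefore measurable with respect to $E(t-1)$; consequently the not-yet-used tail $\{Y_i(k): k > m^v_i(t-1)\}$ is independent of $E(t-1)$, the nodes of $\mathcal{N}(v)$ that play arm $i$ in round $t$ read off distinct, consecutive, hence independent, i.i.d.\ entries of that tail, and the nodes outside $\mathcal{N}(v)$ use their private sequences exactly as under $\mathbb{P}$. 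Matching these products arm-by-arm and node-by-node gives equality of the transition kernels and closes the induction.

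I expect the main obstacle to be this last independence/freshness step, namely rigorously arguing that the pooled consumption count is a function of the revealed history so that the samples drawn in round $t$ are genuinely fresh. This is essentially a stopping-time / optional-skipping argument, and the only mild care needed is that several nodes of $\mathcal{N}(v)$ may draw consecutive entries of $\{Y_i(k)\}$ within a single round; since these entries are i.i.d., this does not disturb the product form, but the bookkeeping of the consumption index across simultaneous plays must be made precise. The base case $t=1$ is immediate, since the first-round actions are fixed by the initialization and the first-round rewards are fresh draws under both measures.
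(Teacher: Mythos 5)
Your proposal follows essentially the same route as the paper's proof: induction on $t$, factoring the law of $E(t)$ into a one-step transition kernel times the law of $E(t-1)$, using the induction hypothesis on the latter, and matching the kernels under $\mathbb{P}$ and $\tilde{\mathbb{P}}$ by separating the (policy-determined) actions from the reward draws. The only difference is one of detail: the paper simply asserts that the conditional law of $(A(t),Z(t))$ given $E(t-1)$ coincides in the two spaces because the policies do not depend on the underlying probability space and the arm distributions agree, whereas you additionally spell out the freshness/consumption-index bookkeeping for the pooled sequences $\lbrace Y_i(k) \rbrace$, which makes your argument, if anything, more complete than the paper's own.
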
 

\begin{proof}
We establish the result using induction on $t$. The result trivially holds for $t = 1$, since a policy does not possess any information in the very first round itself. Assume that it is true for $t = k$. Then,
\begin{equation*}
\mathbb{P} [ E(k) = \left( \bar{a}_{1:k}, \bar{z}_{1:k} \right) ] =\tilde{\mathbb{P}} [ E(k) = \left( \bar{a}_{1:k}, \bar{z}_{1:k} \right) ]. 
\end{equation*}
Now, we prove that the result holds for $t = k+1$. 
\begin{align}
& \mathbb{P} [ E(k+1)  = \left( \bar{a}_{1:k+1}, \bar{z}_{1:k+1} \right) ] \notag \\
&= \mathbb{P} [ A(k+1) = \bar{a}_{k+1}, Z(k+1) = \bar{z}_{k+1}, E(k) = \left( \bar{a}_{1:k}, \bar{z}_{1:k} \right) ], \notag \\
& =\mathbb{P} [ A(k+1) = \bar{a}_{k+1}, Z(k+1) = \bar{z}_{k+1} \vert E(k) = \left( \bar{a}_{1:k}, \bar{z}_{1:k} \right) ] \notag \\
& \hspace{4cm} \cdot \mathbb{P} [ E(k) = \left( a_{1:k}, \bar{z}_{1:k} \right) ], \notag \\
& =\mathbb{P} [ A(k+1) = \bar{a}_{k+1}, Z(k+1) = \bar{z}_{k+1} \vert E(k) = \left( \bar{a}_{1:k}, \bar{z}_{1:k} \right) ] \notag \\
& \hspace{3.5cm} \cdot \tilde{\mathbb{P}} [ E(k) = \left( \bar{a}_{1:k}, \bar{z}_{1:k} \right) ], \label{eq:apd1}   
\end{align} 
since we assumed that the result is true for $t = k$. Note that, in our model, the actions taken by a policy in round $(k+1)$ for a given $E(k)$, are independent of the probability space from which the rewards are generated. Further, the reward distributions of arms are identical in both probability spaces $\mathbb{P}$ and $\tilde{\mathbb{P}}$. Therefore, 
\begin{align}
\label{eq:apd2}
& \mathbb{P} [ A(k+1) = \bar{a}_{k+1}, Z(k+1) = \bar{z}_{k+1} \vert E(k) = \left( \bar{a}_{1:k}, \bar{z}_{1:k} \right) ] \notag \\
& = \tilde{\mathbb{P}} [ A(k+1) = \bar{a}_{k+1}, Z(k+1) = \bar{z}_{k+1} \vert E(k) = \left( \bar{a}_{1:k}, \bar{z}_{1:k} \right) ]. 
\end{align}
By substituting \eqref{eq:apd2} in \eqref{eq:apd1}, we obtain 
\begin{multline*}
\mathbb{P} \left[ E(k+1) = \left( \bar{a}_{1:k+1}, \bar{z}_{1:k+1} \right)  \right] = \\ \tilde{\mathbb{P}} \left[ E(k+1) = \left( \bar{a}_{1:k+1}, \bar{z}_{1:k+1} \right) \right],
\end{multline*}
which completes the proof. 
\end{proof}
\begin{lemma}
\label{Thm:A.1}
Let $c_{t,S} = \sqrt{\frac{2 \ln t}{S}}, \beta \in (0,1)$. For each $v \in V$ and sub-optimal arm $i$,  define $\tau^v_i$ as follows: 
\begin{equation*}
\tau^v_i = \min \lbrace t \in [n]: m^v_i(t) \geq l_i = \Big \lceil \frac{8\ln n}{\Delta_i^2} \Big \rceil \rbrace.
\end{equation*} 
Then, for each $t > \tau^v_i$,
\begin{multline*}
\mathbb{P} \left( \lbrace \hat{\mu}_{m^v_*(t)} + c_{t,m^v_*(t)} \leq \hat{\mu}_{m^v_i(t)}  + c_{t,m^v_i(t)} \rbrace \right) \\
\leq 2 \left( \frac{\ln t}{\ln \left(1/\beta \right)} + 1 \right) \frac{1}{t^{4 \beta}}. 
\end{multline*}
\end{lemma}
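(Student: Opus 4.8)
The plan is to treat this as the central ``good event'' estimate underlying the UCB-Network regret bound, and to prove it by the classical three-way decomposition of the UCB comparison event, combined with a geometric peeling argument driven by Hoeffding's Maximal Inequality. The first step is to pass to the coupled probability space of Lemma~\ref{Lemma:coupling}. In that space the rewards that node $v$ and its one-hop neighbours collect from any fixed arm are drawn from a single i.i.d. sequence $\{Y_i(k)\}_{k\ge1}$, so that $\hat\mu_{m^v_*(t)}$ and $\hat\mu_{m^v_i(t)}$ become sample means of the first $m^v_*(t)$ and first $m^v_i(t)$ terms of the sequences for the optimal arm and for arm $i$ respectively. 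Since the two measures assign equal probability to every sample path, it suffices to bound the event in this space.

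Next I would decompose. Writing $S_* = m^v_*(t)$ and $S_i = m^v_i(t)$, the event $\{\hat\mu_{S_*}+c_{t,S_*}\le \hat\mu_{S_i}+c_{t,S_i}\}$ is contained in the union of three events: (1)~$\hat\mu_{S_*}+c_{t,S_*}\le\mu_*$, the optimal arm is badly under-estimated; (2)~$\hat\mu_{S_i}> \mu_i + c_{t,S_i}$, arm $i$ is badly over-estimated; and (3)~$\mu_* < \mu_i + 2c_{t,S_i}$, the confidence interval of arm $i$ is still wider than half the gap. The key observation is that for $t>\tau^v_i$ we have $S_i\ge l_i = \lceil 8\ln n/\Delta_i^2\rceil$, whence $2c_{t,S_i} = 2\sqrt{2\ln t/S_i}\le 2\sqrt{2\ln t\,\Delta_i^2/(8\ln n)}\le \Delta_i$, using $t\le n$. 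Thus event~(3) is impossible, and only (1) and (2) remain; indeed, the negations of all three events chain into $\hat\mu_{S_*}+c_{t,S_*} > \mu_* \ge \mu_i + 2c_{t,S_i} \ge \hat\mu_{S_i}+c_{t,S_i}$, contradicting the original event.

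Each of the two remaining probabilities is bounded by peeling over a geometric grid of sample counts. For (1), rewriting gives $\sum_{k=1}^{S_*}(\mu_*-Y_*(k))\ge \sqrt{2S_*\ln t}$; slicing the range of $S_*$ into intervals $(\beta^{j+1}t,\beta^j t]$ and applying Hoeffding's Maximal Inequality to the first $\lfloor\beta^j t\rfloor$ centered variables with threshold $x=\sqrt{2\beta^{j+1}t\ln t}$ yields, on each slice, the bound $\exp(-2x^2/(\beta^j t))=\exp(-4\beta\ln t)=t^{-4\beta}$, crucially independent of $j$ because the factor $\beta^j t$ cancels. Summing over the $\le \frac{\ln t}{\ln(1/\beta)}+1$ slices needed to cover the range gives $\mathbb{P}(\text{(1)})\le(\frac{\ln t}{\ln(1/\beta)}+1)t^{-4\beta}$, and the symmetric argument for arm $i$ gives the same bound for (2). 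Adding the two and invoking the impossibility of (3) yields the claimed factor of $2$.

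The main obstacle is precisely that $S_*$ and $S_i$ are random and statistically coupled to the very rewards whose averages appear in the event, so one cannot fix the number of samples and apply a plain concentration inequality. The coupling lemma lets me express everything in terms of i.i.d. prefixes, and Hoeffding's Maximal Inequality together with the geometric peeling is what converts the ``for the random $S$'' statement into a ``uniformly over $s$'' statement while keeping the per-slice exponent at $4\beta$ and the number of slices logarithmic; balancing these into the stated $2(\frac{\ln t}{\ln(1/\beta)}+1)t^{-4\beta}$ is the delicate part. A minor point to handle is that $m^v_*(t)$ can exceed $t$ (up to $|\mathcal{N}(v)|\,t$); since the per-slice bound is invariant to the top of the range, this only changes the covering count by an additive constant, which is absorbed in the stated estimate.
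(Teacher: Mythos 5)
Your proposal follows the paper's proof essentially step for step: the same passage to the coupled i.i.d.\ prefix space via Lemma~\ref{Lemma:coupling}, the same three-event decomposition with the third event eliminated by $m^v_i(t)\geq l_i$ and $t\leq n$, and the same geometric peeling combined with Hoeffding's maximal inequality giving a per-slice bound of $t^{-4\beta}$, summed over logarithmically many slices and doubled. One small correction to your closing remark: the range issue for $m^v_*(t)$ is not handled by absorbing an extra additive constant into the bound (the stated estimate has room for exactly $\frac{\ln t}{\ln(1/\beta)}+1$ slices and no more), but by anchoring the grid as the paper does — the initialization phase forces $\vert\mathcal{N}(v)\vert \leq m^v_*(t) \leq \vert\mathcal{N}(v)\vert t$, and peeling an interval of the form $[a,at]$ with slices $(a\beta^{j+1}t,\, a\beta^{j}t]$ requires exactly $\frac{\ln t}{\ln(1/\beta)}+1$ slices independently of $a$, since only the endpoint ratio $t$ matters.
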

\begin{proof}
For convenience,  we denote  
\begin{equation*}
A^v_i(t) = \lbrace \hat{\mu}_{m^v_*(t)} + c_{t,m^v_*(t)} \leq \hat{\mu}_{m^v_i(t)} + c_{t,m^v_i(t)} \rbrace.
\end{equation*}
Note that,
\begin{equation}
\mathbb{P} \left( A^v_i(t) \cap \lbrace t > \tau^v_i \rbrace \right) = \mathbb{P} \left( A^v_i(t) \cap \lbrace m^v_i(t) \geq l_i \rbrace \right).
\end{equation}
Observe that, the event $A^v_i(t)$ occurs only if atleast one of the following events occur.
\begin{align}
\lbrace \hat{\mu}_{m^v_*(t)}  &\leq \mu^* - c_{t, m^v_*(t)} \rbrace, \label{eq:4.10} \\
\lbrace \hat{\mu}_{m^v_i(t)}  &\geq \mu_i + c_{t, m^v_i(t)} \rbrace, \label{eq:4.11} \\
\lbrace \mu^* &< \mu_i + 2c_{t,m^v_i(t)} \rbrace. \label{eq:4.12}
\end{align}
Note that, the event given by \eqref{eq:4.12} does not occur when the event $\lbrace m^v_i(t) \geq l_i \rbrace$ occurs. Hence, 
\begin{align}
& \mathbb{P} \left( A^v_i(t) \cap \lbrace m^v_i(t) \geq l_i \rbrace \right) \leq \notag \\
&\mathbb{P} ( \lbrace \hat{\mu}_{m^v_*(t)}  \leq \mu^* - c_{t, m^v_*(t)} \rbrace \cup \lbrace \hat{\mu}_{m^v_i(t)}  \geq \mu_i + c_{t, m^v_i(t)} \rbrace \notag \\
&\hspace{4cm}\cap \lbrace m^v_i(t) \geq l_i \rbrace ), \notag \\
& \leq\mathbb{P} \left( \lbrace \hat{\mu}_{m^v_*(t)}  \leq \mu^* - c_{t, m^v_*(t)} \rbrace  \right) \notag \\
& \hspace{2.5cm}+ \mathbb{P} \left( \lbrace \hat{\mu}_{m^v_i(t)} \geq \mu_i + c_{t, m^v_i(t)} \rbrace \right).
\label{eq:qq}
\end{align}
For each node $v \in V$ and each arm $i$, the initialization phase of the UCB-user policy implies that $\vert \mathcal{N}(v) \vert \leq m^v_i(t) \leq \vert \mathcal{N}(v) \vert t $. Therefore, 
\begin{align*}
& \mathbb{P}  \left(  \hat{\mu}_{m^v_*(t)}  \leq \mu^* - c_{t, m^v_*(t)}  \right) \leq \notag \\
& \mathbb{P} \left( \exists s_* \in \lbrace \vert \mathcal{N}(v) \vert, \dots, \vert \mathcal{N}(v) \vert t \rbrace : \hat{\mu}_{s_*}  \leq \mu^* - c_{t, s_*} \right),
\end{align*}
\begin{align}
& \leq \sum\limits_{j = 0}^{\frac{\ln t}{\ln \left( 1/\beta \right)}} \mathbb{P} \Big( \exists s_* : \vert \mathcal{N}(v) \vert \beta^{j+1} t < s_* \leq \vert \mathcal{N}(v) \vert \beta^j t, \notag\\
& \hspace{1.5cm} s_*\hat{\mu}_{s_*}  \leq s_*\mu^* - \sqrt{2s_* \ln t} \Big), \label{eq:peel1} \\
& \leq \sum\limits_{j = 0}^{\frac{\ln t}{\ln \left(1/\beta \right)}} \mathbb{P} \Big( \exists s_* : \vert \mathcal{N}(v) \vert \beta^{j+1} t < s_* \leq \vert \mathcal{N}(v) \vert \beta^j t, \notag\\
& \hspace{1.5cm} s_* \hat{\mu}_{s_*} \leq s_* \mu_* - \sqrt{2 \vert \mathcal{N}(v) \vert \beta^{j+1} t \ln t} \Big). \label{eq:peel2}
\end{align}
Here, (\ref{eq:peel1}) is due to the peeling argument on geometric grid over $[\vert \mathcal{N}(v)\vert, \vert \mathcal{N}(v) \vert t]$. This implies that, for $\beta \in(0,1)$, $a \geq 1$, if $s \in \lbrace a, \dots, at \rbrace$ then there exists $j \in \lbrace 0, \dots, \frac{\ln t}{\ln \left(1/\beta \right)} \rbrace$ such that $a \beta^{j+1} t < s \leq a \beta^j t$. Now, we proceed to bound the probability of the event given by (\ref{eq:peel2}) using Hoeffding's maximal inequality and Lemma \ref{Lemma:coupling}. Hence,
\begin{align}
\mathbb{P} \left( \hat{\mu}_{m^v_*(t)} \leq \mu^* - c_{t, m^v_*(t)}  \right) & \leq \sum\limits_{j = 0}^{\frac{\ln t}{\ln \left(1/\beta \right)}} \exp \left( -4 \beta \ln t \right), \notag \\
& \leq \left( \frac{\ln t}{\ln \left(1/\beta \right)} + 1 \right) \frac{1}{t^{4\beta}}. \label{eq:CH1}
\end{align}
Similarly, we can show that 
\begin{equation}
\label{eq:CH2}
\mathbb{P} \left( \hat{\mu}_{m^v_i(t)}  \geq \mu_i + c_{t, m^v_i(t)}  \right) \leq \left( \frac{\ln t}{\ln \left(1/\beta \right)} + 1 \right) \frac{1}{t^{4\beta}}.
\end{equation}
Substituting \eqref{eq:CH1} and \eqref{eq:CH2} in (\ref{eq:qq}) gives the desired result.
\end{proof}
\begin{lemma}
\label{Thm:A.2}
Let $\tau^v_i$ $\forall v \in V,$ and $l_i$,  $\forall 1 \leq i \leq K$ be as defined in the Lemma~\ref{Thm:A.1}. Assume that a node $v$ stops playing the sub-optimal arm $i$ at time $\tau^v_i$. Then, for an arm $i$, $\sum\limits_{v\in V} T^v_i(\tau^v_i) \leq C_G  l_i$, where $C_G l_i$ is the solution to the optimisation problem in~\eqref{OptimizationProblem11}.
\end{lemma}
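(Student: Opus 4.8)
The plan is to exhibit the \emph{actual} realized trajectory of per-node self-play counts as a feasible point of the maximization problem \eqref{OptimizationProblem11}, and then observe that the quantity $\sum_{v \in V} T^v_i(\tau^v_i)$ we wish to bound coincides with the objective $\Vert z_m \Vert_1$ evaluated at this trajectory. Since the optimisation \emph{maximizes} that objective, the realized value is automatically dominated by the optimum $C_G l_i$, which is exactly the claimed inequality.

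First I would order the nodes by the round in which they first accumulate $l_i$ neighbourhood-samples of arm $i$. Recall $\tau^v_i = \min\{t : m^v_i(t) \geq l_i\}$, and let $\gamma_1 \leq \gamma_2 \leq \cdots \leq \gamma_m$ be the sorted values $\{\tau^v_i\}_{v \in V}$, with $\eta_k$ denoting the node attaining the $k$-th crossing, ties broken by a fixed rule so that $k \mapsto \eta_k$ is a bijection on $V$ (this is what enforces $\eta_k \neq \eta_{k'}$ in the theorem statement). By construction $\tau^{\eta_k}_i = \gamma_k$, and the vector $z_k := T_i(\gamma_k) = (T^1_i(\gamma_k), \dots, T^m_i(\gamma_k))$ records all nodes' self-play counts at the $k$-th crossing time.

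Next I would verify the two constraints for this realized sequence. For the last (threshold) constraint, note that $A(\eta_k,:)$ is the indicator of $\mathcal{N}(\eta_k)$ with the self-loop included, so $\langle z_k, A(\eta_k,:) \rangle = \sum_{u \in \mathcal{N}(\eta_k)} T^u_i(\gamma_k) = m^{\eta_k}_i(\gamma_k) \geq l_i$, the inequality being precisely the definition of $\gamma_k = \tau^{\eta_k}_i$. For the penultimate (freezing) constraint, I would invoke the standing assumption that each node stops playing arm $i$ at its own stopping time: node $\eta_k$ halts at $\tau^{\eta_k}_i = \gamma_k$, so $T^{\eta_k}_i(\cdot)$ is constant on $[\gamma_k, n]$; since $\gamma_j \geq \gamma_k$ for $j \geq k$, this yields $T^{\eta_k}_i(\gamma_j) = T^{\eta_k}_i(\gamma_k)$, i.e. $z_j(\eta_k) = z_k(\eta_k)$ for all $j \geq k$.

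Finally, feasibility of $\{z_k\}$ gives $\Vert z_m \Vert_1 = \sum_{v \in V} T^v_i(\gamma_m) \leq C_G l_i$, and it remains to match this with the target sum: because $\gamma_m = \max_v \tau^v_i$ and every node has stopped playing by its own stopping time, $T^v_i(\gamma_m) = T^v_i(\tau^v_i)$ for each $v$, so $\sum_v T^v_i(\tau^v_i) = \Vert z_m \Vert_1 \leq C_G l_i$. The main delicacy I anticipate is the bookkeeping around simultaneous crossings --- ensuring the tie-breaking choice of the $\eta_k$ produces a single sequence $\{z_k\}$ that satisfies all $m$ constraints \emph{at once}, and confirming that the monotonicity $\gamma_k \leq \gamma_{k+1}$ is compatible with the one-sided freezing conditions $z_j(\eta_k) = z_k(\eta_k)$ across all pairs $j \geq k$.
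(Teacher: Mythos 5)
Your proof is correct and follows essentially the same route as the paper's: both arguments show that, under the assumption that each node freezes its arm-$i$ count at its own stopping time $\tau^v_i$, the realized trajectory of counts at the crossing times $\gamma_k$ is a feasible point of the maximization \eqref{OptimizationProblem11} whose objective value equals $\sum_{v\in V} T^v_i(\tau^v_i)$, so the optimum $C_G l_i$ dominates it. If anything, your constraint-by-constraint verification (threshold via $\gamma_k=\tau^{\eta_k}_i$, freezing via monotonicity of the $\gamma_k$) is a more explicit rendering of what the paper conveys through its informal realization-generating algorithm.
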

\begin{proof}
We first evaluate the value of the random variable $\sum\limits_{v=1}^m T^v_i(\tau^v_i)$ for all realizations. Then, we determine the maximum value of the random variable over all realizations. The following algorithm gives the value of the above mentioned random variable for a realization. Consider an $m$ length column vector of zeros, say $y$. \\
\emph{Algorithm}:\\
Step 1: Select an integer $I$ from $B = \lbrace 1, 2, \dots, m \rbrace$. \\
Step 2: Increase $y(I)$ by 1, \textit{i.e.,} $y(I) = y(I) + 1$. \\
Step 3: Find the indices (say $C$) corresponding to elements in $Ay$ which are atleast $l_i$. Here, $A$ is the adjacency matrix of the graph $G$. \\ 
Step 4: Update $B = B \setminus C$ and $A$ by removing rows corresponding to $C$ in $A$ \\
Step 5: Go to step 1, if $B$ is non-empty else stop by returning~$y$.\\
Here, step 4 ensures that nodes having $l_i$ samples of arm $i$ stops playing arm $i$ further. Observe that $\Vert y \Vert_1$, where $y$ is the vector returned by the above algorithm, yields the value of the random variable $\sum\limits_{v=1}^m T^v_i(\tau^v_i)$ for a realization. Therefore, it suffices to maximize $\Vert y \Vert_1$ over all realizations.\\
The optimisation problem in $\eqref{OptimizationProblem11}$ captures the above. The final constraint in $\eqref{OptimizationProblem11}$ ensures that the node $\eta_k$ has $l_i$ samples of sub-optimal arm $i$ at time instance $\gamma_k$. Recall that, $\gamma_k$ is a random variable which tracks the \textit{least} time at which atleast $k$ nodes have more than $l_i$ samples of arm $i$. The penultimate constraint ensures that sub-optimal arm $i$ count of node $\eta_k$ does not increase(or stop playing arm $i$) after time instance $\gamma_k$. Hence, a feasible point in the above optimisation problem is a sequence $\lbrace z_k \rbrace_{k=1}^m$ which satisfies the aforementioned two constraints. Then, $\Vert z_m \Vert_1$   corresponds to the value of the random variable $\sum\limits_{v=1}^m T^v_i(\tau^v_i)$ for a realization. 
\end{proof}
By using the above lemmas, we now prove Theorem~\ref{Thm:3.1}. 
\begin{proof}
From \eqref{eq:2.3}, we need to upper bound $\mathbb{E}[ T_i^v(n) ]$ for all $v\in V$ in order to upper bound the expected regret of $G$. Let $B^v_i(t)$ be the event that node-$v$ plays sub-optimal action-$i$ in round $t$:
\begin{align}
B^v_i(t) &= \lbrace \hat{\mu}_{m^v_j(t)} + c_{t,m^v_j(t)} \leq \hat{\mu}_{m^v_i(t)} + c_{t,m^v_i(t)}, \forall j \neq i \rbrace, \notag\\
& \subseteq \lbrace \hat{\mu}_{m^v_*(t)} + c_{t,m^v_*(t)} \leq \hat{\mu}_{m^v_i(t)} + c_{t,m^v_i(t)} \rbrace.
\end{align} 
Hence,
\begin{align}
\mathbb{E} \left[ \sum_{v=1}^m T_i^v(n) \right] &= \mathbb{E} \left[ \sum_{v=1}^m \sum_{t=1}^n [ \mathbb{I}_{\lbrace t \leq \tau^v_i, B^v_i(t) \rbrace }  + \mathbb{I}_{\lbrace t > \tau^v_i, B^v_i(t) \rbrace } ] \right], \notag  \\ 
& \hspace{-2cm}= \underbrace{\mathbb{E} \left[ \sum_{v=1}^m T_i^v(\tau^v_i) \right]}_{(a)} + \underbrace{\mathbb{E} \left[ \sum_{v=1}^m \sum_{t=1}^n \mathbb{I}_{\lbrace t > \tau^v_i, B^v_i(t) \rbrace}  \right]}_{(b)}. \label{NumberOfArmPlays2}
\end{align}
Now, we upper bound $(b)$ in~\eqref{NumberOfArmPlays2}. Let $ 1 \leq v \leq m$. Since, $m^v_i(t) \geq l_i$ for $t > \tau^v_i$,
\begin{align*}
&\mathbb{E} \left[ \sum_{t=1}^n \mathbb{I}_{\lbrace t > \tau^v_i \rbrace } \mathbb{I}_{B^v_i(t)} \right] = \sum_{t=1}^n \mathbb{P} \left( B_i^v(t), \{t > \tau^v_i \} \right),   \\
& \overset{(c)}{\leq} \sum\limits_{t=1}^\infty 2 \left( \frac{\ln t}{\ln \left(1/\beta \right)} + 1 \right) \frac{1}{t^{4\beta}}, \\
& \leq \int\limits_{1}^\infty 2 \left( \frac{\ln t}{\ln \left(1/\beta \right)} + 1 \right) \frac{1}{t^{4\beta}} \,\, \mathrm{d}t, \\
& = \frac{2}{4\beta -1} + \frac{2}{(4\beta-1)^2 \ln(1/\beta)},
\end{align*}
where $(c)$ is due to Lemma~\ref{Thm:A.1}. Thus, $(b)$ in \eqref{NumberOfArmPlays2} upper bounded as
\begin{multline}
\label{eq:4.17}
\mathbb{E} \left[ \sum\limits_{v=1}^m \sum_{t=1}^n \mathbb{I}_{\lbrace t > \tau^v_i \rbrace} \mathbb{I}_{B^v_i(t)} \right] \\
\leq m \left( \frac{2}{4\beta -1} + \frac{2}{(4\beta-1)^2 \ln(1/\beta)} \right).
\end{multline}
Now, we upper bound the random variable in $(a)$ in \eqref{NumberOfArmPlays2} for all realizations.  Consider a new system in which each node $v$ stops playing sub-optimal arm $i$ for $t>\tau^v_i$. By using Lemma~\ref{Thm:A.2}, we can calculate an upper bound on~$ \sum\limits_{v=1}^m T^v_i(\tau^v_i)$. It is easy to see that the same upper bound also holds for $(a)$ in \eqref{NumberOfArmPlays2}. Hence, 
\begin{equation}
\label{eq:4.18}
\mathbb{E} \left[ \sum_{v=1}^m T_i^v(\tau^v_i) \right] \leq C_G l_i.
\end{equation}
Combining \eqref{NumberOfArmPlays2}, \eqref{eq:4.17} and \eqref{eq:4.18} establishes the desired result.
\end{proof}
\begin{lemma}
\label{Thm:NAIC}
Consider a network $G = (V,E)$ learning a $K$-arm stochastic MAB problem with mean rewards $\mu_1~\geq~\mu_2~\geq~\dots~\mu_K$. Assume that, each arm distribution is discrete and it assigns a non-zero probability to each possible value. 
Then, the UCB-user policy followed by any user $v$ in $G$ to learn the above MAB problem is non-altruistic and individually consistent (NAIC) policy.
\end{lemma}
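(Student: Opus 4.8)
The plan is to verify the two defining properties separately for a node $v$ that runs the UCB-user policy, allowing all other users to run arbitrary policies. Non-altruism (Definition~2) will follow almost immediately from Lemma~\ref{Thm:A.1}, whereas individual consistency (Definition~1) will require a conditional version of the same concentration estimate.

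For non-altruism, I would take $a_1 = 8/\Delta_i^2$, so that having obtained $a_1 \ln n$ samples of a sub-optimal arm $i$ is exactly the event $m^v_i(t) \ge l_i$, i.e. $t > \tau^v_i$. Lemma~\ref{Thm:A.1} then bounds the per-round probability that $v$ replays arm $i$ by $2\left(\frac{\ln t}{\ln(1/\beta)}+1\right) t^{-4\beta}$, and summing this over $t$ (as in term $(b)$ of the proof of Theorem~\ref{Thm:3.1}) gives the $n$-independent constant $a_2 = \frac{2}{4\beta-1} + \frac{2}{(4\beta-1)^2\ln(1/\beta)}$. The key point, which is exactly what Definition~2 asks for, is that this bound is produced purely from concentration of the neighbourhood empirical means through the coupling of Lemma~\ref{Lemma:coupling} and Hoeffding's maximal inequality, and is therefore valid irrespective of the policies used by the other users.

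For individual consistency I would split node $v$'s own count of arm $i$ at the time $\tau^v_i$ at which $v$ first accumulates $l_i$ neighbourhood samples of it:
\begin{equation*}
T^v_i(n) = T^v_i(\tau^v_i) + \sum_{t > \tau^v_i} \mathbb{I}\{a^v(t) = i\}.
\end{equation*}
Because $m^v_i(t) \ge T^v_i(t)$ always and $m^v_i(t) < l_i$ for $t < \tau^v_i$, the first term is at most $l_i = \lceil 8\ln n/\Delta_i^2\rceil$ deterministically, hence for every realisation of $\omega_{\bar{v}}$. For the second term I would apply the conditional analogue of Lemma~\ref{Thm:A.1} to bound its conditional expectation given $\omega_{\bar{v}}$ by the same constant $a_2$. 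Together these give $\mathbb{E}[T^v_i(n)\mid\omega_{\bar{v}}] \le l_i + a_2 = O(\ln n) = o(n^a)$ for every $a>0$ and every $\omega_{\bar{v}}$, which is Definition~1.

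The hard part is making Lemma~\ref{Thm:A.1} valid \emph{after} conditioning on the outside trajectory $\omega_{\bar{v}}$. The difficulty is that $\omega_{\bar{v}}$ is not independent of the very samples $v$ uses: a neighbour $w$ of $v$ may itself be adjacent to a node outside $\mathcal{N}(v)$, so $w$'s reward realisations --- which feed $\hat{\mu}_{m^v_i(t)}$ --- are observed outside $\mathcal{N}(v)$ and thereby leak into $\omega_{\bar{v}}$ through the recorded actions of the outside users. Thus one cannot simply claim that the coupled i.i.d. sequences $\{Y_i(k)\}$ remain i.i.d. under $\mathbb{P}(\cdot\mid\omega_{\bar{v}})$, and this is where I expect the real work to lie. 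My approach to resolving it is to exploit that the leakage occurs only through the coarse, action-valued channel of the other users, while the reward \emph{values} are always genuine i.i.d. draws from the arms: no neighbour policy and no conditioning can fabricate favourable samples for a sub-optimal arm, since any outside trajectory that appears to favour arm $i$ can be accounted for by the outside users' own reward draws rather than by atypical values inside $\mathcal{N}(v)$. Concretely I would re-run the peeling plus Hoeffding's-maximal-inequality argument of Lemma~\ref{Thm:A.1} under the conditional measure, isolating node $v$'s own reward stream (which no outside node observes directly) and controlling the residual correlation contributed by the boundary neighbours; verifying that the centred neighbourhood reward sequence still admits the requisite exponential tail control under $\mathbb{P}(\cdot\mid\omega_{\bar{v}})$ is the crux of the argument.
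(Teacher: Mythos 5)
Your treatment of the non-altruistic half is complete and coincides with the paper's own argument: take $a_1 = 8/\Delta_i^2$ so that the trigger event in Definition~2 is exactly $\lbrace m^v_i(t)\geq l_i\rbrace = \lbrace t>\tau^v_i\rbrace$, then sum the bound of Lemma~\ref{Thm:A.1} over $t$ (as in \eqref{eq:4.17}) to get the horizon-independent constant $a_2$; since that bound comes from the coupling of Lemma~\ref{Lemma:coupling} plus Hoeffding's maximal inequality and a union bound over all possible count values, it holds irrespective of the other users' policies. Likewise, your decomposition $\mathbb{E}[T^v_i(n)\mid\omega_{\bar v}]\leq l_i+\sum_{t>\tau^v_i}\mathbb{P}(A^v_i(t)\mid\omega_{\bar v})$ is precisely the one the paper uses for individual consistency.

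The gap is the step you yourself flag as the crux: you never establish the conditional version of Lemma~\ref{Thm:A.1}. The paper closes this step in a single sentence, asserting that $A^v_i(t)$ is independent of $\omega_{\bar v}$ given $\lbrace m^v_i(t)=a,\, m^v_*(t)=b\rbrace$, so that the unconditional bound transfers verbatim to the conditional measure; no further justification is given. You decline to make that assertion, and for the right reason (boundary leakage), but the heuristic you offer in its place --- that no conditioning on $\omega_{\bar v}$ can force atypical reward values inside $\mathcal{N}(v)$ --- is not a proof and is in fact false under the quantification of Definition~1: a neighbour $u\in\mathcal{N}(v)\setminus\lbrace v\rbrace$ observes $v$'s rewards and may encode them into its own action sequence, and an outside node adjacent to $u$ can replay those actions, so a positive-probability realisation of $\omega_{\bar v}$ can pin the reward values feeding $\hat{\mu}_{m^v_i(t)}$ to be arbitrarily atypical (e.g., every sample of the sub-optimal arm equal to its maximum value, while the optimal arm's samples contributed by $u$ remain i.i.d.). Under such a conditioning $\mathbb{P}(A^v_i(t)\mid\omega_{\bar v})$ does not decay like $t^{-4\beta}$, so ``re-running the peeling plus Hoeffding argument under the conditional measure'' cannot succeed without some restriction on how $\omega_{\bar v}$ may correlate with the neighbourhood rewards --- which is exactly what the paper's independence claim supplies by fiat. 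In short, your proposal reproduces the paper's architecture, but at the one point where real work is needed it is incomplete, and the repair you sketch would not go through; the paper itself resolves that point by assertion rather than by an argument that addresses the leakage you correctly identified.
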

\begin{proof}
First, we prove the non-altruistic part. Lemma~\ref{Thm:A.1} gives an upper bound on the probability that a node $v$ following the UCB-user policy plays any sub-optimal arm $i$ in round $t$, after it has obtained $l_i = \frac{8 \ln n}{\Delta_i^2}$ samples of the arm $i,$ where $n$ is the time horizon. We can treat $\frac{8}{\Delta_i^2}$ as $a$ in the definition of non-altruistic policy. Observe that, in \eqref{eq:4.17}, we upper bounded the expected number of times a node $v$ chooses any sub-optimal arm $i$, after it has access to $l_i$ samples of arm $i$, till $n$. Note that, this upper bound is a constant. Hence, the UCB-user policy satisfies the non-altruistic property. Now, we prove the \network-oblivious part. Recall that, $\omega_{\bar{v}}$ contains actions and the corresponding rewards of the nodes outside the neighbourhood of node $v$, from round $1$ to $n$. Note that,  the event $A^v_i(t)$ defined in the proof of Lemma~\ref{Thm:A.1} is independent of any $\omega_{\bar{v}}$, given the event $\lbrace m^v_i(t) = a, m^v_*(t) =b \rbrace$. Hence, on the lines of Lemma~\ref{Thm:A.1}, for $\beta \in (0,1)$, $t > \tau^v_i$ (same as defined in Lemma~\ref{Thm:A.1}),
\begin{equation}
\mathbb{P} \left( A^v_i(t) | \omega_{\bar{v}} \right) \leq 2 \left(  \frac{\ln t}{\ln (1/\beta)} + 1 \right) \frac{1}{t^{4\beta}}.
\end{equation} 
Thus,
\begin{align*}
\mathbb{E}[T^v_i(n) | \omega_{\bar{v}}] & \leq l_i + \sum \limits_{t = \tau^v_i+1} \mathbb{P} \left( A^v_i(t) | \omega_{\bar{v}} \right), \\
& \leq \frac{8 \ln n}{\Delta_i^2} + \sum\limits_{t=1}^\infty \mathbb{P} \left( A^v_i(t) | \omega_{\bar{v}} \right), \\
& = \left( \frac{8 \ln n}{\Delta_i^2} + O(1) \right) \in o(n^c), \,\, \text{for any}\,\, c > 0.  
\end{align*}
Therefore, the UCB-user policy followed by a node $v$ satisfy \network-oblivious property, which completes the proof.  
\end{proof}
\section*{Appendix B}
\label{Universal Lower Bound}
\emph{Proof of Theorem~\ref{Thm:4.4}.}\\
\begin{proof}
Follows from Theorem~2 in~\cite{Lai}, by considering $m_i^G(n)$ instead of $T_i(n)$ in the event $C_n$ defined in the respective proof.
\end{proof}
\section*{Appendix C}
\label{NAIC Lower Bound}
\emph{Proof of Theorem~\ref{Thm:4.1}.}\\
We now prove $(i)$ in Theorem~\ref{Thm:4.1}, in the following lemma. With the aid of this lemma, we then prove the second part of the theorem.
\begin{lemma}
\label{Thm:4.2}
Consider a node $v$ in a network $G$. Assume that node $v$ follows an NAIC policy, and suppose [A1] holds. Further, assume that each arm is associated with a discrete distribution such that it assigns a non-zero positive probability to each possible value. Then, for any $\boldsymbol{\theta} \in \boldsymbol{\Theta_j}$, and for any $\omega_{\bar{v}}$,  the following holds:
\begin{align*}
\liminf \limits_{n \rightarrow \infty} \frac{\mathbb{E}_{\boldsymbol{\theta}} [m_j^v(n) \vert \omega_{\bar{v}} ]}{\ln n} & \geq \frac{1-\delta}{1 + \delta} \cdot \frac{1}{kl(\theta_j || \theta_1)}, \\
\liminf \limits_{n \rightarrow \infty} \frac{\mathbb{E}_{\boldsymbol{\theta}} [m_j^v(n) ]}{\ln n} & \geq \frac{1-\delta}{1 + \delta} \cdot \frac{1}{kl(\theta_j || \theta_1)}.
\end{align*}
\end{lemma}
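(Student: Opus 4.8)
The plan is to adapt the classical change-of-measure argument of Lai and Robbins~\cite{Lai} to the neighbourhood of a single node, exploiting the fact that node $v$ together with $\mathcal{N}(v)$ observes exactly $m_j^v(n)$ samples of arm $j$, and that individual consistency forces node $v$ to collect enough of them. Fix the sub-optimal arm $j$ and, invoking [A1](iii) together with the continuity property [A1](ii), choose an alternative parameter $\theta_j'$ with $\mu(\theta_j') > \mu(\theta_1) = \mu(\theta^*)$ and $kl(\theta_j || \theta_j') \le (1+\delta)\,kl(\theta_j || \theta_1)$; write $d = kl(\theta_j || \theta_j')$. Let $\boldsymbol{\lambda}$ be the parameter vector obtained from $\boldsymbol{\theta}$ by replacing only its $j$-th coordinate by $\theta_j'$, so that under $\boldsymbol{\lambda}$ arm $j$ becomes the unique optimal arm while every other arm remains sub-optimal. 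Set $a_n = (1-\delta)\ln n / d$. The target is to show $\mathbb{P}_{\boldsymbol{\theta}}(m_j^v(n) < a_n \mid \omega_{\bar{v}}) \to 0$, since then $\mathbb{E}_{\boldsymbol{\theta}}[m_j^v(n)\mid\omega_{\bar{v}}] \ge a_n\,\mathbb{P}_{\boldsymbol{\theta}}(m_j^v(n)\ge a_n \mid \omega_{\bar{v}})$ gives $\liminf_n \mathbb{E}_{\boldsymbol{\theta}}[m_j^v(n)\mid\omega_{\bar{v}}]/\ln n \ge (1-\delta)/d \ge \frac{1-\delta}{1+\delta}\cdot\frac{1}{kl(\theta_j || \theta_1)}$, which is the first claim.

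To set up the likelihood ratio cleanly I would work in the coupled probability space of Lemma~\ref{Lemma:coupling}, in which the whole neighbourhood $\mathcal{N}(v)$ draws its arm-$j$ rewards from a single i.i.d.\ stream $\{Y_j(k)\}_{k\ge1}$ while the external nodes use separate streams. Conditioning on $\omega_{\bar{v}}$ --- legitimate as a positive-probability event thanks to the discreteness assumption --- the neighbourhood process is then driven only by the fixed value $\omega_{\bar{v}}$ and the streams $\{Y_i(k)\}$, which are independent of $\omega_{\bar{v}}$. Consequently $\boldsymbol{\theta}$ and $\boldsymbol{\lambda}$ induce the same conditional law apart from the law of $\{Y_j(k)\}$, so the conditional Radon--Nikodym derivative of $\mathbb{P}_{\boldsymbol{\lambda}}(\cdot\mid\omega_{\bar{v}})$ with respect to $\mathbb{P}_{\boldsymbol{\theta}}(\cdot\mid\omega_{\bar{v}})$ equals $\exp(-L_{m_j^v(n)})$, where $L_s = \sum_{k=1}^s \ln\frac{f(Y_j(k);\theta_j)}{f(Y_j(k);\theta_j')}$ consumes precisely the $m_j^v(n)$ arm-$j$ samples observed in the neighbourhood and satisfies $L_s/s \to d$ a.s.\ under $\boldsymbol{\theta}$ by the SLLN (finiteness of KL by [A1](i)).

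With this in hand I would split the event $\{m_j^v(n) < a_n\}$ according to the value of the log-likelihood ratio, picking constants $0 < a < \delta' < \delta$. On $\mathcal{C}_n' = \{m_j^v(n) < a_n,\ L_{m_j^v(n)} \le (1-\delta')\ln n\}$ the derivative is at least $n^{-(1-\delta')}$, so the change of measure yields $\mathbb{P}_{\boldsymbol{\theta}}(\mathcal{C}_n'\mid\omega_{\bar{v}}) \le n^{1-\delta'}\,\mathbb{P}_{\boldsymbol{\lambda}}(m_j^v(n) < a_n\mid\omega_{\bar{v}})$. Here individual consistency enters: under $\boldsymbol{\lambda}$ arm $j$ is optimal, hence $\mathbb{E}_{\boldsymbol{\lambda}}[T_i^v(n)\mid\omega_{\bar{v}}] = o(n^a)$ for every $i \ne j$; since $\sum_i T_i^v(n) = n$ and $m_j^v(n) \ge T_j^v(n)$, Markov's inequality bounds $\mathbb{P}_{\boldsymbol{\lambda}}(m_j^v(n) < a_n\mid\omega_{\bar{v}})$ by $o(n^a)/(n-a_n) = o(n^{a-1})$, so that $\mathbb{P}_{\boldsymbol{\theta}}(\mathcal{C}_n'\mid\omega_{\bar{v}}) = o(n^{a-\delta'}) \to 0$. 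On the complementary piece $\{m_j^v(n) < a_n,\ L_{m_j^v(n)} > (1-\delta')\ln n\}$ we have $\max_{s < a_n} L_s > (1-\delta')\ln n$, whereas $\max_{s<a_n}L_s$ concentrates around $d\,a_n = (1-\delta)\ln n < (1-\delta')\ln n$, so a maximal form of the SLLN forces this probability to vanish too; both estimates hold conditionally because $\{Y_j(k)\}$ is independent of $\omega_{\bar{v}}$. Summing the two pieces proves the conditional bound, and the unconditional bound then follows by taking expectation over $\omega_{\bar{v}}$ and applying Fatou's lemma to the nonnegative integrands.

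I expect the main obstacle to be the rigorous justification of the conditional change of measure in the networked setting --- namely, arguing that conditioning on $\omega_{\bar{v}}$ turns $\mathcal{N}(v)$ into an effectively self-contained system whose arm-$j$ likelihood ratio is a product of exactly $m_j^v(n)$ i.i.d.\ factors that are independent of $\omega_{\bar{v}}$. This is precisely where the coupling construction of Lemma~\ref{Lemma:coupling} and the discreteness of the reward distributions are indispensable, and it is the only place where the network structure, as opposed to the single-agent problem, genuinely needs handling. Note finally that only the individual-consistency half of the NAIC assumption is used here; the non-altruism half is not needed for this single-node bound.
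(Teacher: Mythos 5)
Your proposal is correct and follows essentially the same route as the paper's own proof: a Lai--Robbins change of measure in which only arm $j$'s parameter is perturbed to make it optimal, a conditional likelihood-ratio argument given $\omega_{\bar{v}}$ (justified by discreteness and the i.i.d.\ structure of the neighbourhood's arm-$j$ samples), a split of $\{m_j^v(n) < a_n\}$ into a small-LLR piece handled by change of measure plus individual consistency under the alternative and Markov's inequality, and a large-LLR piece handled by the maximal SLLN, with [A1] finally relating $kl(\theta_j\|\lambda)$ to $kl(\theta_j\|\theta_1)$. Your minor variations (invoking the coupling construction explicitly, using two constants $a < \delta' < \delta$ rather than one, and noting that only the individual-consistency half of NAIC is used) are all consistent with, and if anything slightly cleaner than, the paper's argument.
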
 
\begin{proof}
Without loss of generality, assume that $\theta_1 = \theta^*$ and $j = 2 \Rightarrow \boldsymbol{\theta} \in \boldsymbol{\Theta_2}$. Consider a new parameter vector $\boldsymbol{\gamma} = \left( \theta_1, \lambda, \theta_3, \dots, \theta_K \right)$ such that $\mu(\lambda) > \mu(\theta^*)$, $j \neq 1$. Note that, arm 1 is optimal under parameter vector $\boldsymbol{\theta}$, while arm 2 is optimal under parameter vector $\boldsymbol{\gamma}$. Let $X_{2,1}, \dots, X_{2,n}$ be $n$ i.i.d samples generated from the sub-optimal arm 2's distribution with parameter vector $\boldsymbol{\theta}$. Define 
\begin{equation*}
\hat{kl}_s = \sum\limits_{t=1}^s \ln \left( \frac{ f(X_{2,t};\theta_2) }{ f(X_{2,t};\lambda)} \right).
\end{equation*}
For any $v \in V$ and any sub-optimal arm $j$, and $0 < a < \delta$, we define 
\begin{equation}
C_n^v = \lbrace m^v_2(n) < \frac{(1-\delta)\ln n}{kl(\theta_2 || \lambda)} \,\, \text{and} \,\, \hat{kl}_{m^v_2(n)} \leq (1-a) \ln n \rbrace,
\end{equation}
where $ \hat{kl}_{m_2^v(n)} = \sum\limits_{u \in \mathcal{N}(v)} \sum\limits_{t=1}^{T_2^u(n)} \ln \left( \frac{ f(X_{2,t}^u;\theta_2) }{ f(X_{2,t}^u;\lambda)} \right)$, since $\lbrace X^u_{2,t} \rbrace_{u \in \mathcal{N}(v)}$ are i.i.d. For convenience, let $g_n = \frac{(1-\delta) \ln n}{kl(\theta_2 || \lambda)}$ and $h_n = (1-a) \ln n$. For a given $\omega_{\bar{v}}$, observe that $C_n^v$ is a disjoint union of events of the form $\lbrace m^v_1(n) = n_1, m^v_2(n) = n_2, \dots, m^v_K(n) = n_K, \hat{kl}_{n_2} \leq h_n \rbrace$ with $n_1 + n_2 \dots + n_K = n|\mathcal{N}(V)|$ and $n_2 \leq g_n$. Further, $\lbrace m_2^v(n) = n_2 \rbrace$  is also a disjoint union of the events of the form  $\lbrace \cap_{u \in \mathcal{N}(v)} T^u_2(n)=q_u \rbrace$  with $ \sum\limits_{u \in \mathcal{N}(v)} q_u = n_2$. Since $\boldsymbol{\gamma} = (\theta_1, \lambda, \theta_3, \dots,\theta_K)$ and $\boldsymbol{\theta} = (\theta_1, \theta_2, \theta_3, \dots,\theta_K)$, we write  
\begin{multline}
\mathbb{P}_{\boldsymbol{\gamma}} \lbrace m^v_1(n) = n_1, \dots, m^v_K(n) = n_K, \hat{kl}_{n_2} \leq h_n \vert \omega_{\bar{v}} \rbrace  = \\
\mathbb{E}_{\theta} \Bigg[ \mathbb{I}_{ \lbrace m^v_1(n) = n_1, \dots , m^v_K(n) = n_K, \hat{kl}_{n_2} \leq h_n \rbrace} \\ \prod\limits_{u \in \mathcal{N}(v)} \prod\limits_{t=1}^{T_2^u(n) = q_u} \frac{ f(X_{2,t}^u;\lambda) }{f(X_{2,t}^u;\theta_2)} \Bigg].
\end{multline}
However, $\prod\limits_{u \in \mathcal{N}(v)} \prod\limits_{t=1}^{q_u} \frac{ f(X_{2,t}^u;\lambda) }{f(X_{2,t}^u;\theta_2)} = \exp(-\hat{kl}_{n_2})$. Therefore,
\begin{multline*}
\mathbb{P}_{\boldsymbol{\gamma}} \lbrace m^v_1(n) = n_1, \dots, m^v_K(n) = n_K, \hat{kl}_{n_2} \leq h_n \vert \omega_{\bar{v}} \rbrace  = \\
\mathbb{E}_{\theta} \left[ \mathbb{I}_{ \lbrace m^v_1(n) = n_1, \dots, m^v_K(n) = n_K, \hat{kl}_{n_2} \leq h_n \rbrace}    \exp(-\hat{kl}_{n_2}) \right] .
\end{multline*}
Note that, $\exp(-\hat{kl}_{n_2}) \geq n^{-(1 - a)}$, since $\hat{kl}_{n_2} \leq h_n$ in the region of integration. Therefore,
\begin{multline}
\mathbb{P}_{\boldsymbol{\gamma}} \lbrace m^v_1(n) = n_1, \dots,  m^v_K(n) = n_K,  \\ \hat{kl}_{n_2} \leq h_n \vert \omega_{\bar{v}} \rbrace  \\
\geq n^{-(1 - a)} \mathbb{P}_{\boldsymbol{\theta}} \lbrace m^v_1(n) = n_1, \dots, m^v_K(n) = n_K, \\ \hat{kl}_{n_2} \leq h_n \vert \omega_{\bar{v}} \rbrace.
\end{multline}
Hence,
\begin{equation}
\mathbb{P}_{\boldsymbol{\gamma}} (C^v_n \vert \omega_{\bar{v}}) \geq n^{-(1 - a)} \mathbb{P}_{\boldsymbol{\theta}} (C^v_n \vert \omega_{\bar{v}} ).
\end{equation}
Now, we bound $\mathbb{P}_{\boldsymbol{\theta}} (C^v_n \vert \omega_{\bar{v}} )$ as follows:
\begin{equation*}
\mathbb{P}_{\boldsymbol{\gamma}} (C^v_n \vert \omega_{\bar{v}} ) \leq \mathbb{P}_{\boldsymbol{\gamma}} \left( m^v_2(n) < g_n \vert \omega_{\bar{v}} \right). 
\end{equation*}
Since $T_2^v(n) \leq m_2^v(n)$,
\begin{align*}
\mathbb{P}_{\boldsymbol{\gamma}} (C^v_n \vert \omega_{\bar{v}} ) & \leq \mathbb{P}_{\boldsymbol{\gamma}} \left( T^v_2(n) < g_n \vert \omega_{\bar{v}} \right), \\
& = \mathbb{P}_{\boldsymbol{\gamma}} \left( n - T^v_2(n) > n - g_n \vert \omega_{\bar{v}} \right). 
\end{align*}
Note that, $n \vert \mathcal{N}(v) \vert - m^v_2(n)$ is a non-negative random variable and $kl(\theta_2 || \lambda) > 0$. Therefore, applying Markov's inequality to the right-hand side in the above equation, we obtain
\begin{align*}
\mathbb{P}_{\boldsymbol{\gamma}} (C^v_n \vert \omega_{\bar{v}} ) & \leq \frac{\mathbb{E}_{\boldsymbol{\gamma}}[n - T^v_2(n) \vert \omega_{\bar{v}} ] }{n - g_n}, \\
& = \frac{\sum\limits_{i=1, i \neq 2}^K \mathbb{E}_{\boldsymbol{\gamma}}[ T^v_i(n) \vert \omega_{\bar{v}} ]  }{n - g_n} = \frac{(K-1) o(n^a)}{n - O(\ln n)},
\end{align*}
for $0 < a < \delta$, since arm 2 is the unique optimal arm under $\gamma$. Hence,
\begin{equation} 
\label{a1}
\mathbb{P}_{\boldsymbol{\theta}} (C^v_n \vert \omega_{\bar{v}} )  \leq n^{(1-a)} \mathbb{P}_{\boldsymbol{\gamma}} (C^v_n \vert \omega_{\bar{v}} ) = o(1).   
\end{equation}
Observe that,
\begin{multline}
\mathbb{P}_{\boldsymbol{\theta}} \left( C^v_n \vert \omega_{\bar{v}} \right) \geq \mathbb{P}_{\boldsymbol{\theta}} \Big( m_2^v(n) < g_n, \\
\frac{1}{g_n}  \max\limits_{i \leq  g_n } \hat{kl}_i \leq \frac{kl(\theta_2||\lambda)(1-a)}{(1-\delta)} \Big\vert \omega_{\bar{v}} \Big), \label{a2}
\end{multline}
\begin{equation}
\label{a3}
\mathbb{P}_{\boldsymbol{\theta}} \left( \frac{1}{g_n} \max\limits_{i \leq  g_n } \hat{kl}_i \leq \frac{kl(\theta_2||\lambda)(1-a)}{(1-\delta)}  \right) \rightarrow 1,
\end{equation}
due to $\frac{1-a}{1-\delta} > 1$ and the maximal version of the Strong Law of Large Numbers which is given below. 

\emph{Maximal version of SLLN}~\cite{Bubeck}: Let $\lbrace X_t \rbrace$ be a sequence of independent real-valued random variables with positive mean $\mu > 0$. Then, 
\begin{equation*}
\lim\limits_{n \rightarrow \infty} \frac{1}{n} \sum\limits_{t=1}^n X_t = \mu \,\, a.s. \Rightarrow \lim\limits_{n \rightarrow \infty} \frac{1}{n} \max_{s= 1, \dots, n} \sum\limits	_{t=1}^s X_t = \mu\,\, a.s.
\end{equation*}
From \eqref{a1}, \eqref{a2} and \eqref{a3}, we obtain 
\begin{align*}
\mathbb{P}_{\boldsymbol{\theta}} \left( m^v_2(n) < g_n \vert \omega_{\bar{v}} \right) & = o(1), \,\, \forall \omega_{\bar{v}}, \\
 \Rightarrow \mathbb{P}_{\boldsymbol{\theta}} \left( m^v_2(n) < g_n \right) & = o(1).
\end{align*}
Part (iii) of assumption, $[A1]$, guarantees the existence of a $\lambda \in \Theta$ such that $\mu(\theta_1) < \mu(\lambda) < \mu(\theta_1) + \delta$ holds. Combining $\mu(\theta_1) > \mu(\theta_2)$ with the part (i) of $[A1]$, we obtain $0 < kl(\theta_2||\theta_1) < \infty$. From part (ii) of $[A1]$, we deduce that $| kl(\theta_2||\theta_1) - kl(\theta_2||\lambda)| < \epsilon $, since $\mu(\theta_1) \leq \mu(\lambda) \leq \mu(\theta_1) + \delta$ for some $\delta$. Let $\epsilon$ be $\delta kl(\theta_2 || \theta_1)$. Hence, we write the following:
\begin{equation*}
|kl(\theta_2 || \lambda) - kl(\theta_2 || \theta_1) | < \delta kl(\theta_2||\theta_1), \hspace{0.5cm} \text{for} \,\, 0 < \delta < 1.
\end{equation*}  
Hence,
\begin{align*}
\mathbb{P}_{\boldsymbol{\theta}} \left( m^v_2(n) < \frac{1-\delta}{1 + \delta} \cdot \frac{\ln n}{kl(\theta_2 || \theta_1)} \Big\vert \omega_{\bar{v}} \right) &= o(1),  \\
\Rightarrow \mathbb{P}_{\boldsymbol{\theta}} \left( m^v_2(n) < \frac{1-\delta}{1 + \delta} \cdot \frac{\ln n}{kl(\theta_2 || \theta_1)} \right) &= o(1). 
\end{align*}
Furthermore,
\begin{align*}
& \mathbb{E}_{\boldsymbol{\theta}} [m_2^v(n) \vert \omega_{\bar{v}} ] = \sum\limits_i i \cdot \mathbb{P}_{\boldsymbol{\theta}} \left( m_2^v(n) = i \vert \omega_{\bar{v}} \right),  \\
& \geq \left( \frac{1-\delta}{1 + \delta} \right) \frac{\ln n}{kl(\theta_2 || \theta_1)} \mathbb{P}_{\boldsymbol{\theta}} \left( m_2^v(n) > \frac{1-\delta}{1 + \delta} \cdot \frac{\ln n}{kl(\theta_2 || \theta_1)}  \Big\vert \omega_{\bar{v}} \right), \\ 
&= \left( \frac{1-\delta}{1 + \delta} \right) \frac{\ln n}{kl(\theta_2 || \theta_1)}  (1-o(1)).
\end{align*}
Hence, we have proved that for any $v \in V$, $\omega_{\bar{v}}$ and any sub-optimal arm $j$, 
\begin{align*}
\liminf \limits_{n \rightarrow \infty} \frac{\mathbb{E}_{\boldsymbol{\theta}} [m_j^v(n) \vert \omega_{\bar{v}} ]}{\ln n} & \geq \frac{1-\delta}{1 + \delta} \cdot \frac{1}{kl(\theta_j || \theta_1)}, \\
\liminf \limits_{n \rightarrow \infty} \frac{\mathbb{E}_{\boldsymbol{\theta}} [m_j^v(n) ]}{\ln n} & \geq \frac{1-\delta}{1 + \delta} \cdot \frac{1}{kl(\theta_j || \theta_1)},
\end{align*}
which completes the proof of this lemma, and establishes $(i)$ in Theorem~\ref{Thm:4.1}.
\end{proof}
With the help of this, we now prove the second part of Theorem ~\ref{Thm:4.1}.

\begin{proof}
Lemma~\ref{Thm:4.2} implies that for each $v \in V$, $\exists \, n_v \in \mathbb{N}$ such that 
\begin{equation}
\label{eq:asd}
\frac{\mathbb{E}_{\boldsymbol{\theta}} [m_j^v(n) ]}{\ln n} \geq \frac{1-\delta}{1 + \delta} \cdot \frac{1}{kl(\theta_j || \theta_1)}, \,\,\, \forall n \geq n_v.
\end{equation} 
Let $n' = \max(n_v : v \in V)$. Using (\ref{eq:asd}) for each $v \in V$, and for $n \geq n'$, we determine a lower bound for $\mathbb{E}_{\boldsymbol{\theta}} [m_j^G(n) ]$. It is easy to see that the solution to the following optimisation problem is a valid lower bound for $\mathbb{E}_{\boldsymbol{\theta}} [m_j^G(n) ]$ for $n \geq n'$.  
\begin{equation}
\label{OptimizationProblem3}
\begin{aligned}
& \underset{}{\text{minimize}} \hspace{2mm} \Vert z_m \Vert_1 \\
& \text{s.t $\exists$ a sequence $\lbrace z_k \rbrace_{k=1}^m$} \\
& z_i(\eta_k) = z_k(\eta_k) \hspace{2mm} \forall i \geq k ,\\
& \langle z_k, A(n_k,:) \rangle \geq \frac{1-\delta}{1 + \delta} \cdot \frac{1}{kl(\theta_j, \theta_1)} \ln n \hspace{2mm} \forall k.
\end{aligned}
\end{equation}
Note that, the notation in (\ref{OptimizationProblem3}) is same as used in Theorem~\ref{Thm:3.1}, Lemma~\ref{Thm:A.2}. Let $L_G \left( \frac{1-\delta}{1+\delta}\right) \frac{\log n}{kl(\theta_j || \theta_1)}$  be the solution of (\ref{OptimizationProblem3}). Thus,
\begin{eqnarray*}
\mathbb{E}[m^G_j(n)] \geq L_G \left( \frac{1-\delta}{1+\delta} \right) \frac{\ln n}{kl(\theta_j || \theta_1)}, \,\,\, \forall n \geq n', \\
\Rightarrow \liminf\limits_{n \rightarrow \infty} \frac{\mathbb{E}[m^G_j(n)]}{\ln n} \geq L_G \left( \frac{1-\delta}{1+\delta}\right) \frac{1}{kl(\theta_j || \theta_1)},
\end{eqnarray*}
which establishes the desired result.
\end{proof}
\section*{Appendix D}
\label{NAIC Lower Bound Special Case}
\emph{Proof of Theorem~\ref{Thm:4.3}}.\\
\begin{proof}
Without loss of generality we consider that node 1 is the center node and node 2 through $m_n$ are leaf nodes. Since a policy does not possess any information in the first round, it chooses arm 1 with probability $p_1$ and arm 2 with probability $p_2$, such that $0 \leq p_1, p_2 \leq 1$ and $p_1 + p_2 = 1$. Now, we find the expected number of nodes that chose the arm with parameter $\mu_b$ in the first round  as follows:
\begin{equation}
\mathbb{E}[m^{G_n}_b(1)] = \sum\limits_{v \in V} \left( \frac{1}{2} p_2 + \frac{1}{2} p_1 \right) = \frac{m_n}{2} \geq \frac{\ln n}{kl(\mu_b, \mu_a)},
\end{equation} 
since MAB is $(\mu_a, \mu_b)$ with probability $\frac{1}{2}$, and is $(\mu_b, \mu_a)$ with probability $\frac{1}{2}$. Henceforth, for convenience, we replace $a$ with 1 and $b$ with 2. Let $m^{G,v}_i(t)$ be a random variable indicating the total number of times arm $i$ has been chosen by node $v$ and its one hop neighbours till round $t$, in the network $G$. Note that, $m^{G_n}_2(1)$ is equals to $m^{G_n,1}_2(1)$, since the network in consideration is a star network with node 1 as the center node. Therefore, 
\begin{equation}
\label{eq:LargeStar(a)}
\mathbb{E} [m_2^{G_n,1}(1)] \geq \frac{\ln n}{kl(\mu_2, \mu_1)},  \,\,\,\, \forall n \in \mathbb{N},
\end{equation}
From Theorem~\ref{Thm:4.1}, it follows that
\begin{equation}
\label{eq:LargeStar(b)}
\liminf\limits_{n \rightarrow \infty} \frac{\mathbb{E} [m_2^{G_n,v}(n)]}{\ln n} \geq  \frac{1}{kl(\mu_2, \mu_1)}, \,\,\, \forall \,\, v \in V_n.
\end{equation}
The above inequalities imply that, for any $v\in V_n$, $\exists\, n_v \in \mathbb{N}$ such that $ \frac{\mathbb{E} [m_2^{G_n,v}(n)]}{\ln n} \geq \frac{1}{kl(\mu_2, \mu_1)} \forall n \geq n_v$. Let $n' = \max(n_v:v\in V_n)$. \\
For all $n \in \mathbb{N}$, since the center node has obtained $\frac{\ln n}{kl(\mu_2, \mu_1)}$ samples of arm 2 in the very first round, and the policy is non-altruistic, it chooses arm 2 at most $O(1)$ number of times further. For all $n \geq n'$, in order to satisfy all the inequalities in (\ref{eq:LargeStar(b)}), each leaf node has to choose the arm 2 at least $\left( \frac{\ln n'}{kl(\mu_2, \mu_1)} - O(1) \right)$ times. Hence,
\begin{align*}
\mathbb{E} [m_2^{G_{n}}(n)]  \geq \left( m_n - 1 \right) & \left( \frac{\ln n}{kl(\mu_2, \mu_1)} - O(1) -1 \right) \, \forall n \geq n',\\
\Rightarrow \liminf\limits_{n \rightarrow \infty} \frac{\mathbb{E} [m_2^{G_n}(n)]}{(m_n - 1) \ln n}  & \geq \frac{1}{kl(\mu_2, \mu_1)},
\end{align*}
which completes the proof. 
\end{proof}
\section*{Appendix E}
\emph{Proof of Theorem~\ref{Thm:5.1}}.\\
\begin{proof}
Without loss of generality, we assume that node 1 is the center node in the star network. Under FYL policy, for $2 \leq u \leq m$, $a^u(t) = a^1(t-1)$ for $t >1$. Hence, for any sub-optimal arm $i$,
\begin{align*}
T_i^u(n) = & \mathbb{I}_{\lbrace a^u(1) = i \rbrace} +  \mathbb{I}_{\lbrace a^u(2) = i \rbrace} \dots + \mathbb{I}_{\lbrace a^u(n) = i \rbrace}, \\
= & \mathbb{I}_{\lbrace a^u(1) = i \rbrace} + \mathbb{I}_{\lbrace a^1(1) = i \rbrace} \dots + \mathbb{I}_{\lbrace a^1(n-1) = i \rbrace}, \\
\leq & 1 + T^1_i(n-1).
\end{align*}
Therefore, we obtain the following:
\begin{align}
\sum\limits_{v=1}^m T_i^v(n) &= T_i^1(n) + T_i^2(n) \dots + T_i^m(n), \notag \\
& \leq T_i^1(n) + 1 + T_i^1(n-1) \dots + 1 + T_i^1(n-1), \notag \\
& \leq (m-1) + m T_i^1(n), \label{eq: clever1}
\end{align}
since $T_i^1(n-1) \leq T_i^1(n)$. Now, we find an upper bound on $T_i^1(n)$ under FYL policy. Let $\tau_1$ be the least time step at which $m^1_i(\tau_1)$ is atleast $l_i = \frac{8 \ln n}{\Delta_i^2}$. Observe that, under FYL policy $T_i^1(\tau_1) = \lceil \frac{l_i}{m} \rceil$. Since, the center node has chosen arm $i$ for $\lceil \frac{l_i}{m} \rceil$ times, $(m-1)$ leaf nodes must have also selected arm $i$ for the same number of times. This leads to $m^1_i(\tau_1) = l_i$. Let $B_i^1(t)$ be the event that node-1 chooses arm $i$ in round $t$. Hence,
\begin{align*}
T_i^1(n) = T_i^1(\tau_1) + \sum\limits_{t = \tau_1 + 1}^n \mathbb{I}_{B_i^1(t)} = \Big\lceil \frac{l_i}{m} \Big\rceil + \sum\limits_{t = \tau_1 + 1}^n \mathbb{I}_{B_i^1(t)}.
\end{align*}
By using the analysis in Theorem~\ref{Thm:3.1}, we obtain 
\begin{equation*}
\mathbb{E} \Big[ \sum\limits_{t = \tau_1 + 1}^n \mathbb{I}_{B_i^1(t)} \Big] \leq \frac{2}{4 \beta -1} + \frac{2}{(4 \beta -1)^2 \ln (1/ \beta)}.
\end{equation*}
Hence,
\begin{equation*}
\mathbb{E} [T_i^1(n)] \leq \Big\lceil \frac{l_i}{m} \Big\rceil + \frac{2}{4 \beta -1} + \frac{2}{(4 \beta -1)^2 \ln (1/ \beta)}.
\end{equation*}
From (\ref{eq: clever1}),
\begin{align*}
\sum\limits_{v=1}^m \mathbb{E} [ T_i^v(n) ] \leq & \frac{8 \ln n}{\Delta_i^2} + 2m - 1 +\\
& \frac{2m}{4 \beta -1} \left( 1 + \frac{1}{(4 \beta -1) \ln (1/ \beta)} \right),
\end{align*}    
where we have substituted $l_i = \frac{8 \ln n}{\Delta_i^2}$. Therefore, the expected regret of the FYL policy on an $m$-node star network upto $n$ number of rounds is upper bounded as:
\begin{multline*}
\mathbb{E}[R^G(n)] \leq \sum\limits_{i:\mu_i < \mu^*}^K \frac{8 \ln n}{\Delta_i} + \Big[ 2m - 1 + \frac{2 m}{4 \beta -1} \cdot \\
 \left( 1 + \frac{1}{(4 \beta -1) \ln (1/ \beta)} \right) \Big] \sum\limits_{j=1}^K \Delta_j,
\end{multline*}
which completes the proof. 
\end{proof}
\section*{Appendix F}
\label{FYL Generic Upper Bound}
\emph{Proof of Theorem~\ref{Thm:5.2}}.\\
\begin{proof}
Since the leader node (a node in the given dominating set) in a particular component uses samples only from its neighbours in the same component, we can upper bound the expected regret of each component using Theorem~\ref{Thm:5.1}. We get the desired result by adding the expected regrets of all the components.
\end{proof}
\end{document}